\journal{Artificial Intelligence}
\def\ps@pprintTitle{%
  \let\@oddhead\@empty
  \let\@evenhead\@empty
  \let\@oddfoot\@empty
  \let\@evenfoot\@oddfoot
}
\newtheorem{definition}{Definition}
\newtheorem{theorem}{Theorem}
\newtheorem{lemma}{Lemma}
\newtheorem{claim}{Claim}
\newtheorem{corollary}{Corollary}
\newenvironment{proof}{\noindent{\sf Proof.}}{\hfill $\boxtimes\hspace{2mm}$\linebreak}
\renewcommand{\phi}{\varphi}
\renewcommand{\epsilon}{\varepsilon}
\renewcommand{\phi}{\varphi}
\renewcommand{\epsilon}{\varepsilon}
\newcommand{\N}{{\sf N}}
\newcommand{\B}{{\sf B}}
\newcommand{\cN}{{\sf \overline{N}}}
\newcommand{\K}{{\sf K}}
\newcommand{\cK}{{\sf \overline{K}}}
\newcommand{\SSS}{{\sf S}}
\newenvironment{proof-of-claim}{\noindent{\sc Proof of Claim.}}{\hfill $\boxtimes\hspace{2mm}$\linebreak}
\newsavebox{\diamonddotsavebox}
\sbox{\diamonddotsavebox}{$\Diamond$\hspace{-1.8mm}\raisebox{0.3mm}{$\cdot$}\hspace{1mm}}
\begin{document}
\begin{frontmatter}

\author{Pavel Naumov}
\address{Claremont McKenna College, Claremont, California, USA}
\ead{pgn2@cornell.edu}
\author{Jia Tao}
\address{Lafayette College, Easton, Pennsylvania, USA}
\ead{taoj@lafayette.edu}







\title{Blameworthiness in Strategic Games\\ with Imperfect Information}
\title{Knowledge and Blameworthiness}

\begin{abstract}
    Blameworthiness of an agent or a coalition of agents is often defined in terms of the principle of alternative possibilities: for the coalition to be responsible for an outcome, the outcome must take place and the coalition should have had a strategy to prevent it. In this article we argue that in the settings with imperfect information, not only should the coalition have had a strategy, but it also should have known that it had a strategy, and it should have known what the strategy was. 
    
    The main technical result of the article is a sound and complete bimodal logic that describes the interplay between knowledge and blameworthiness in strategic games with imperfect information.
\end{abstract}

\end{frontmatter}



\section{Introduction}

In this article we study blameworthiness of agents and their coalitions in multiagent systems. Throughout centuries, blameworthiness, especially in the context of free will and moral responsibility, has been at the focus of philosophical discussions~\cite{se13eb}. These discussions continue in the modern time~\cite{f94p,fr00,nk07nous,m15ps,w17}. Frankfurt~\cite{f69tjop} acknowledges that a dominant role in these discussions has been played by what he calls a {\em principle of alternate possibilities}: ``a person is morally responsible for what he has done only if he could have done otherwise". 
Following the established tradition~\cite{w17}, we refer to this principle as the principle of {\em alternative} possibilities. Cushman~\cite{c15cop} talks about {\em counterfactual possibility}: ``a person could have prevented their harmful conduct, even though they did not".  

Others refer to an alternative possibility as a {\em counterfactual} possibility~\cite{c15cop,h16}.
Halpern and Pearl proposed several versions of a formal definition of causality as a relation between sets of variables that include a counterfactual requirement~\cite{h16}. Halpern and Kleiman-Weiner~\cite{hk18aaai} used a similar setting to define {\em degrees} of blameworthiness. Batusov and Soutchanski~\cite{bs18aaai} gave a counterfactual-based definition of causality in situation calculus. Alechina, Halpern, and Logan~\cite{ahl17aamas} applied counterfactual definition of causality to team plans.
In~\cite{nt19aaai}, we proposed a logical system that describes properties of coalition  blameworthiness in strategic games as a modal operator whose semantics is also based on the principle of alternative possibilities.

Although the principle of alternative possibilities makes sense in the settings with perfect information, it needs to be adjusted for settings with imperfect information. Indeed, consider a traffic situation depicted in Figure~\ref{intro-example-1 figure}. A self-driving truck $t$ and a regular car $c$ are approaching an intersection at which truck $t$ must stop to yield to car $c$. The truck is experiencing a sudden brake failure and it cannot stop, nor can it slow down at the intersection. The truck turns on flashing lights and sends distress signals to other self-driving cars by radio. 
The driver of car $c$ can see the flashing lights, but she does not receive the radio signal. She can also observe that the truck does not slow down. The driver of car $c$ has two potential strategies to avoid a collision with the truck: to slow down or to accelerate. 
\begin{figure}[ht]
\begin{center}
\vspace{0mm}
\scalebox{0.6}{\includegraphics{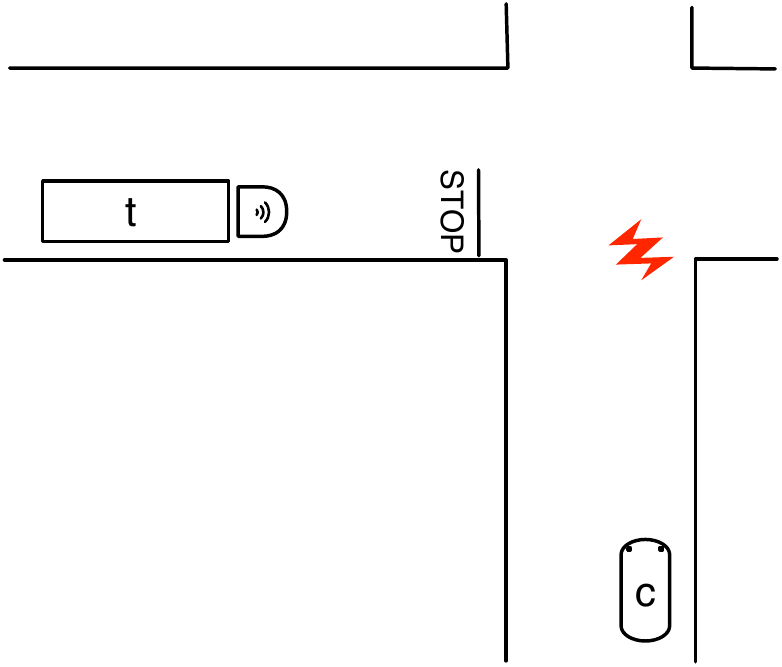}}
\caption{A traffic situation.}\label{intro-example-1 figure}
\vspace{-5mm}
\end{center}
\end{figure}
The driver understands that one of these two strategies will succeed, but since she does not know the exact speed of the truck, she does not know which of the two strategies will succeed. Suppose that the collision could be avoided if the car accelerates, but the car driver decides to slow down. The vehicles collide. According to the principle of alternative possibilities, the driver of the car is responsible for the collision because she had a strategy to avoid the collision but did not use it. 

It is not likely, however, that a court will find the driver of car $c$ responsible for the accident. For example, US Model Penal Code~\cite{ali62} distinguishes different forms of legal liability as different combinations of ``guilty actions'' and ``guilty mind''. The situation in our example falls under strict liability (pure ``guilty actions'' without an accompanied ``guilty mind''). In many situations, strict liability does not lead to legal liability.

In this article we propose a formal semantics of blameworthiness in strategic games with imperfect information. According to this semantics, an agent (or a coalition of agents) is blamable for $\phi$ if $\phi$ is true and the agent {\em knew how} to prevent $\phi$.  
In our example, since the driver of the car does not know that she must accelerate in order to avoid the collision, she cannot be blamed for the collision. We write this as:
$
\neg\B_{c}(\mbox{``Vehicles collided.''}).
$
Now, consider a similar traffic situation in which car $c$ is a self-driving vehicle. The car receives the distress signal from truck $t$, which contains the truck's exact speed. From this information, car $c$ determines that it can avoid the collision if it accelerates. However, if the car slows down, then the vehicles collide and the self-driving car $c$ is blameable for the collision: 
$
\B_{c}(\mbox{``Vehicles collided.''}).
$

The main technical result of this article is a bimodal logical system that describes the interplay between  knowledge and blameworthiness of coalitions in strategic games with imperfect information.

The article is organized as follows. In the next section we review the literature. Section~\ref{syntax and semantics section} presents the formal syntax and semantics of our logical system. Section~\ref{axioms section} introduces our axioms and compares them to those in the related works. Section~\ref{examples section} gives examples of formal derivations in the proposed logical system. Sections~\ref{soundness section} and \ref{completeness section} prove the soundness and the completeness of our system. Section~\ref{conclusion section} concludes with a discussion of future work.

\section{Related Literature}

Although the study of responsibility and blameworthiness has a long history in philosophy, the use of formal logical systems to capture these notions is a recent development. Xu~\cite{x98jpl} proposed a complete logical system for reasoning about responsibility of individual agents in multiagent systems. His approach was extended to coalitions by Broersen, Herzig, and  Troquard~\cite{bht09jancl}. The definition of responsibility in these works is different from ours. They assume that an agent or a coalition of agents is responsible for an outcome if the actions that they took unavoidably lead to the outcome. Xu~\cite{x98jpl} also requires a possibility that the outcome might not happen. However, he does not require that the agent has a strategy to prevent the outcome. Thus, their definitions are not based on the principle of alternative possibilities.

Halpern and Pearl gave several versions of a formal definition of causality between sets of variables using counterfactuals~\cite{h16}. Lorini and Schwarzentruber~\cite{ls11ai} observed that a variation of this definition can be captured in STIT logic~\cite{bp90krdr,h01,h95jpl,hp17rsl,ow16sl}. 
They said that there is a counterfactual dependence between actions of a coalition $C$ and an outcome $\phi$ if $\phi$ is true and the complement of the coalition $C$ had no strategy to force $\phi$. In their notations: 
$
{\sf CHP}_C\phi\equiv \phi\wedge \neg[\mathcal{A}\!\setminus\! C]\,\phi,
$
where $\mathcal{A}$ is the set of all agents.
They also observed that many human emotions (regret, rejoice, disappointment, elation) can be expressed through a combination of the modality ${\sf CHP}$ and the knowledge modality.


The game-like setting of this article closely resembles the semantics of Mark Pauly's logic of coalition power~\cite{p01illc,p02}. His approach has been  widely investigated in the literature~\cite{g01tark,vw05ai,b07ijcai,sgvw06aamas,abvs10jal,avw09ai,b14sr,gjt13jaamas,alnr11jlc,ga17tark,ge18aamas,nr18kr}. Logics of coalition power study modality that express what a coalition {\em can do}. In~\cite{nt19aaai} we modified Mark Pauly's semantics to express what a coalition {\em could have done}. We axiomatized a logic that combines statements ``$\phi$ is true'' and ``coalition $C$ could have prevented $\phi$'' into a single modality $\B_C\phi$.

In this article we replace ``coalition $C$ could have prevented $\phi$'' in~\cite{nt19aaai} with ``coalition $C$ knew how it could have prevented $\phi$''. 
The distinction between an agent having a strategy, knowing that a strategy exists, and knowing what the strategy is has been studied before.  While Jamroga and {\AA}gotnes~\cite{ja07jancl} talked about ``knowledge to identify and execute a strategy",  Jamroga and van der Hoek~\cite{jv04fm} discussed ``difference between an agent knowing that he has a suitable strategy and knowing the strategy itself". Van Benthem~\cite{v01ber} called such strategies ``uniform". Broersen~\cite{b08deon} talked about ``knowingly doing'', while  Broersen, Herzig, and Troquard~\cite{bht09jancl} discussed modality ``know they can do''. We used term ``executable strategy"~\cite{nt17aamas}. Wang~\cite{w15lori,w17synthese} talked about ``knowing how". 

The properties of know-how as a modality have been previously axiomatized in different settings. {\AA}got\-nes and Alechina introduced a complete axiomatization of an interplay between single-agent knowledge and coalition know-how modalities to achieve a goal in one step~\cite{aa16jlc}. A modal logic that combines the distributed knowledge modality with the coalition know-how modality to maintain a goal was axiomatized by us in \cite{nt17aamas}. A sound and complete logical system in a single-agent setting for know-how strategies to {achieve} a goal in multiple steps rather than to {maintain} a goal is developed by Fervari, Herzig, Li, and Wang~\cite{fhlw17ijcai}. In \cite{nt17tark,nt18ai}, we developed a trimodal logical system that describes an interplay between the (not know-how) coalition strategic modality, the coalition know-how modality, and the distributed knowledge modality. In \cite{nt18aaai}, we proposed a logical system that combines the coalition know-how modality with the distributed knowledge modality in the perfect recall setting. In \cite{nt18aamas}, we introduced a logical system for second-order know-how. Wang proposed a complete axiomatization of ``knowing how'' as a binary modality~\cite{w15lori,w17synthese}, but his logical system does not include the knowledge modality.

The axioms of the logical system proposed in this article are very similar to our axioms in~\cite{nt19aaai} for blameworthiness in games with perfect information and so are the proofs of soundness of these axioms. The most important contribution of this article is the proof of completeness, in which the construction from~\cite{nt19aaai} is significantly modified to incorporate distributed knowledge. These modifications are discussed in the beginning of Section~\ref{completeness section}. 


\section{Syntax and Semantics}\label{syntax and semantics section}

In this article we assume a fixed set $\mathcal{A}$ of agents and a fixed set of propositional variables. By a coalition we mean an arbitrary subset of set $\mathcal{A}$.
\begin{definition}\label{Phi}
$\Phi$ is the minimal set of formulae such that
\begin{enumerate}
    \item $p\in\Phi$ for each propositional variable $p$,
    \item $\phi\to\psi,\neg\phi\in\Phi$ for all formulae $\phi,\psi\in\Phi$,
    \item $\K_C\phi$, $\B_C\phi\in\Phi$ for each coalition $C\subseteq\mathcal{A}$ and each $\phi\in\Phi$. 
\end{enumerate}
\end{definition}
In other words, language $\Phi$ is defined by  grammar:
$$
\phi := p\;|\;\neg\phi\;|\;\phi\to\phi\;|\;\K_C\phi\;|\;\B_C\phi.
$$
Formula $\K_C\phi$ is read as ``coalition $C$ distributively knew before the actions were taken that statement $\phi$ would be true'' and formula $\B_C\phi$ as ``coalition $C$ is blamable for $\phi$''.

Boolean connectives $\vee$, $\wedge$, and $\leftrightarrow$ as well as constants $\bot$ and $\top$ are defined in the standard way. By formula $\cK_C\phi$ we mean $\neg\K_C\neg\phi$. For the disjunction of multiple formulae, we assume that parentheses are nested to the left. That is, formula $\chi_1\vee\chi_2\vee\chi_3$ is a shorthand for $(\chi_1\vee\chi_2)\vee\chi_3$. As usual, the empty disjunction is defined to be $\bot$. For any two sets $X$ and $Y$, by $X^Y$ we denote the set of all functions from $Y$ to $X$.

The formal semantics of modalities $\K$ and  $\B$ is defined in terms of models, which we call \emph{games}. These are one-shot strategic games with imperfect information. We specify the set of actions by all agents, or a {\em complete action profile}, as a function $\delta\in \Delta^\mathcal{A}$ from the set of all agents $\mathcal{A}$ to the set of all actions $\Delta$.

\begin{definition}\label{game definition}
A game is a tuple $\left(I, \{\sim_a\}_{a\in\mathcal{A}},\Delta,\Omega,P,\pi\right)$, where 
\begin{enumerate}
    \item $I$ is a set of ``initial states'',
    \item $\sim_a$ is an ``indistinguishability'' equivalence relation on set $I$,
    \item $\Delta$ is a nonempty set of ``actions'',
    \item $\Omega$ is a set of ``outcomes'',
    \item the set of ``plays'' $P$ is an arbitrary set of tuples $(\alpha,\delta,\omega)\in I\times \Delta^\mathcal{A}\times \Omega$ where for each initial state $\alpha\in I$ and each complete action profile $\delta\in\Delta^\mathcal{A}$, there is at least one outcome $\omega\in \Omega$ such that $(\alpha,\delta,\omega)\in P$,
    
    \item $\pi$ is a function that maps propositional variables into subsets of $P$.
\end{enumerate}
\end{definition} 

In the introductory example, the set $I$ has two states {\em high} and {\em low}, corresponding to the truck going at a high or low speed, respectively. The driver of the regular car $c$ cannot distinguish these two states while these states can be distinguished by a self-driving version of car $c$. For the sake of simplicity, assume that there are two actions that car $c$ can take: $\Delta=\{\mbox{\em slow-down}, \mbox{\em speed-up}\}$ and two possible outcomes: $\Omega=\{\mbox{\em collision}, \mbox{\em no collision}\}$. Vehicles collide if either the truck goes with a low speed and the car decides to slow-down or the truck goes with a high speed and the car decides to accelerate. In our case there is only one agent (car $c$), so the complete action profile can be described by giving just the action of this agent. We refer to the two complete action profiles in this situation simply as profile {\em slow-down} and profile {\em speed-up}. 
The list of all possible scenarios (or ``plays'') is given by the set 
\begin{eqnarray*}
P&=&\{(\mbox{\em high},\mbox{\em speed-up},\mbox{\em collision}),
(\mbox{\em high},\mbox{\em slow-down},\mbox{\em no collision}),\\
&&\{(\mbox{\em low},\mbox{\em speed-up},\mbox{\em no collision}),
(\mbox{\em low},\mbox{\em slow-down},\mbox{\em collision})
\}.
\end{eqnarray*}
Note that in our example an initial state and an action profile uniquely determine the outcome. In general, just like in~\cite{nt19aaai}, we allow nondeterministic games where this does not have to be true. However, unlike~\cite{nt19aaai}, we do require that for each initial state and each action profile there is at least one outcome. As we discuss in Section~\ref{axioms section}, this requirement captures better the intuitive notion of blameworthiness. 

Whether statement $\B_C\phi$ is true or false depends not only on the outcome but also on the initial state of the game. Indeed, coalition $C$ might have known how to prevent $\phi$ in one initial state but not in the other. For this reason, we assume that all statements are true or false for a particular play of the game.  For example, propositional variable $p$ can stand for ``car $c$ slowed down and collided with truck $t$ going at a high speed''.  As a result, function $\pi$ in the definition above maps $p$ into subsets of $P$ rather than subsets of $\Omega$.

By an action profile of a coalition $C$ we mean an arbitrary function $s\in \Delta^C$ that assigns an action to each member of the coalition. If $s_1$ and $s_2$ are action profiles of coalitions $C_1$ and $C_2$, respectively, and $C$ is any coalition such that $C\subseteq C_1\cap C_2$, then we write $s_1=_C s_2$ to denote that $s_1(a)=s_2(a)$ for each agent $a\in C$. We write $\alpha\sim_C\alpha'$ if $\alpha\sim_a\alpha'$ for each $a\in C$. In particular, it means that $\alpha\sim_\varnothing\alpha'$ for any two initial states $\alpha,\alpha'\in I$.

Next is the key definition of this article. Its item 5 formally specifies blameworthiness using the principle of alternative possibilities.  In order for a coalition to be blamable for $\phi$, not only must $\phi$ be true and the coalition should have had a strategy to prevent $\phi$, but this strategy should work in all initial states that the coalition cannot distinguish from the current state. In other words, the coalition should have known the strategy.

\begin{definition}\label{sat} 
For any game $\left(I, \{\sim_a\}_{a\in\mathcal{A}},\Delta,\Omega,P,\pi\right)$, any formula $\phi\in\Phi$, and any play $(\alpha,\delta,\omega)\in P$, 
the satisfiability relation $(\alpha,\delta,\omega)\Vdash\phi$ is defined recursively as follows:
\begin{enumerate}
    \item $(\alpha,\delta,\omega)\Vdash p$ if $(\alpha,\delta,\omega)\in \pi(p)$, where $p$ is a propositional variable,
    \item $(\alpha,\delta,\omega)\Vdash \neg\phi$ if $(\alpha,\delta,\omega)\nVdash \phi$,
    \item $(\alpha,\delta,\omega)\Vdash\phi\to\psi$ if $(\alpha,\delta,\omega)\nVdash\phi$ or $(\alpha,\delta,\omega)\Vdash\psi$,
    \item $(\alpha,\delta,\omega)\Vdash\K_C\phi$ if $(\alpha',\delta',\omega')\Vdash\phi$ for each play $(\alpha',\delta',\omega')\in P$ such that $\alpha\sim_C\alpha'$,
    \item $(\alpha,\delta,\omega)\Vdash\B_C\phi$ if $(\alpha,\delta,\omega)\Vdash\phi$ and there is an action profile $s\in \Delta^C$ of coalition $C$ such that for each play $(\alpha',\delta',\omega')\in P$, if $\alpha\sim_C\alpha'$ and $s=_C\delta'$, then $(\alpha',\delta',\omega')\nVdash\phi$.
\end{enumerate}
\end{definition}
Since modality $\K_C$ represents {\em a priori} (before the actions) knowledge of coalition $C$, only the initial states in plays $(\alpha,\delta,\omega)$ and $(\alpha',\delta',\omega')$ are indistinguishable in item 4 of Definition~\ref{sat}.
Similarly, since item 5 of the above definition refers to indistinguishability relation $\sim_C$ on initial states, not outcomes, the knowledge of the strategy to prevent captured by the modality $\B_C\phi$ is also {\em a priori} knowledge of coalition $C$.

For formula $\B_C\phi$ to be true, item 5 of Definition~\ref{sat} requires coalition $C$ to know a strategy to prevent $\phi$, but it does not require the coalition $C$ to know that $\phi$ is true. This captures a common belief, for example, that a murder is blameable for a death even if the murder does not know that the victim died. 

Note that in item 5 of the above definition we do not assume that coalition $C$ is a minimal one that knew how to prevent the outcome. This is different from the definition of blameworthiness in~\cite{h17}. Our approach is consistent with how word ``blame'' is often used in English. For example, the sentence ``Millennials being blamed for decline of American cheese''~\cite{g18foxnews} does not imply that no one in the millennial generation likes American cheese.


\section{Axioms}\label{axioms section}

In addition to the propositional tautologies in  language $\Phi$, our logical system contains the following axioms.

\begin{enumerate}
    \item Truth: $\K_C\phi\to\phi$ and $\B_C\phi\to\phi$,
    \item Distributivity: $\K_C(\phi\to\psi)\to(\K_C\phi\to \K_C\psi)$,
    \item Negative Introspection: $\neg\K_C\phi\to\K_C\neg\K_C\phi$,
    \item Monotonicity: $\K_C\phi\to\K_D\phi$ and $\B_C\phi\to\B_D\phi$, where $C\subseteq D$,
    \item None to Blame: $\neg\B_\varnothing\phi$,
    \item Blamelessness of Truth: $\neg\B_C\top$,
    \item Joint Responsibility:  $\cK_C\B_C\phi\wedge\cK_D\B_D\psi\to (\phi\vee\psi\to\B_{C\cup D}(\phi\vee\psi))$, where $C\cap D=\varnothing$,
    \item Blame for Known Cause: $\K_C(\phi\to\psi)\to(\B_C\psi\to(\phi\to \B_C\phi))$,
    \item Knowledge of Fairness: $\B_C\phi\to\K_C(\phi\to\B_C\phi)$.
    
\end{enumerate}
We write $\vdash\phi$ if formula $\phi$ is provable from the axioms of our system using the Modus Ponens and
the Necessitation inference rules:
$$
\dfrac{\phi,\phi\to\psi}{\psi},
\hspace{20mm}
\dfrac{\phi}{\K_C\phi}.
$$
We write $X\vdash\phi$ if formula $\phi\in\Phi$ is provable from the theorems of our logical system and an additional set of axioms $X$ using only the Modus Ponens inference rule. Note that if set $X$ is empty, then statement $X\vdash\phi$ is equivalent to $\vdash\phi$. We say that set $X$ is consistent if $X\nvdash\bot$.

The Truth, the Distributivity, the Negative Introspection, and the Monotonicity axioms for epistemic modality $\K$ are the standard S5 axioms from the logic of distributed knowledge. The Truth axiom for blameworthiness modality $\B$ states that a coalition could only be blamed for something true. The Monotonicity axiom for the  blameworthiness modality states that if a part of a coalition is blamable for something, then the whole coalition is also blamable for the same thing. The None to Blame axiom says that an empty coalition can be blamed for nothing. The Blamelessness of Truth axiom states that no coalition can be blamed for a tautology. This is a new axiom that does not have an equivalent in~\cite{nt19aaai}. The soundness of this axiom relies on our assumption in item 4 of Definition~\ref{game definition} that any combination of an initial state and a complete action profile has at least one outcome. Without this assumption, a coalition $C$ might be able to terminate the game without reaching an outcome. In other words, coalition $C$ might have a strategy to ``prevent'' $\top$.

The remaining three axioms describe the interplay between knowledge and blameworthiness modalities.
The Joint Responsibility axiom says that if a coalition $C$ cannot exclude a possibility of being blamable for $\phi$, a coalition $D$ cannot exclude a possibility of being blamable for $\psi$, and the disjunction $\phi\vee\psi$ is true, then the joint coalition $C\cup D$ is blamable for the disjunction. This axiom resembles Xu's axiom for the independence of individual agents~\cite{x98jpl},
$$
\cN\B_{a_1}\phi_1\wedge\dots\wedge\cN\B_{a_n}\phi_n\to \cN(\B_{a_1}\phi_1\wedge\dots\wedge\B_{a_n}\phi_n),
$$ where modality $\cN$ is an abbreviation for $\neg\N\neg$ and formula $\N\phi$ stands for ``formula $\phi$ is universally true in the given model''. 
Broersen, Herzig, and Troquard~\cite{bht09jancl} captured the independence of disjoint coalitions $C$ and $D$ in their Lemma 17:
$$
\cN\B_C\phi\wedge\cN\B_D\psi\to\cN(\B_C\phi\wedge\B_D\psi).
$$
In spite of certain similarity, the definition of responsibility used in \cite{x98jpl} and \cite{bht09jancl} does not assume the principle of alternative possibilities.
The Joint Responsibility axiom is also similar to Marc Pauly's Cooperation axiom for the logic of coalitional power~\cite{p01illc,p02}:
$$
\SSS_C\phi\wedge\SSS_D\psi\to\SSS_{C\cup D}(\phi\wedge\psi),
$$
where coalitions $C$ and $D$ are disjoint and $\SSS_C\phi$ stands for ``coalition $C$ has a strategy to achieve $\phi$". Finally, The Joint Responsibility axiom in this article is a generalization of the Joint Responsibility axiom for games with perfect information~\cite{nt19aaai}:
$$\cN\B_C\phi\wedge\cN\B_D\psi\to (\phi\vee\psi\to\B_{C\cup D}(\phi\vee\psi)),$$
where coalitions $C$ and $D$ are disjoint.

Informally, if $\K_C(\phi\to\psi)$, then we say that $\phi$ is a cause of $\psi$ known to coalition $C$. Note that if a coalition has a strategy to prevent a known cause $\phi$, then the coalition also has a strategy to prevent $\psi$. However, it is not true that the coalition $C$ should be blamed for $\phi$ if it can be blamed for $\psi$ because ``the known cause'' $\phi$ might not be true. If the known cause $\phi$ is true, then the blameworthiness for $\psi$ implies the blameworthiness for $\phi$. This is captured in the Blame for Known Cause axiom. A similar axiom, but without knowledge, appeared in~\cite{nt19aaai}.



Our last axiom also goes back to one of the axioms for the games with perfect information. The Fairness axiom for these games
$$
\B_C\phi\to\N(\phi\to\B_C\phi)
$$
states ``if a coalition $C$ is blamed for $\phi$, then it should be blamed for $\phi$ whenever $\phi$ is true''~\cite{nt19aaai}. The Knowledge of Fairness axiom in the current article states that  if a coalition $C$ is blamable for $\phi$ in an imperfect information game, then it {\em knows} that it is blamable for $\phi$ whenever $\phi$ is true.

Next, we state the deduction and Lindenbaum  lemmas for our logical system. These lemmas are used later in the proof of the completeness.

\begin{lemma}[deduction]\label{deduction lemma}
If $X,\phi\vdash\psi$, then $X\vdash\phi\to\psi$.
\end{lemma}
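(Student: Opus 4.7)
The plan is to proceed by induction on the length of the derivation of $\psi$ from $X \cup \{\phi\}$. The crucial observation, which keeps the argument entirely standard, is that $X \vdash \psi$ was defined to allow only Modus Ponens on top of the theorems of the system and the axioms in $X$; no Necessitation is applied at this level. Thus no subtlety involving generalization over $\K_C$ can arise, and the usual propositional deduction theorem goes through verbatim.

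For the base cases, I would consider three subcases. If $\psi$ is already a theorem of the system or belongs to $X$, then $X \vdash \psi$, and combining this with the propositional tautology $\psi \to (\phi \to \psi)$ via Modus Ponens yields $X \vdash \phi \to \psi$. If $\psi$ is the hypothesis $\phi$ itself, then $X \vdash \phi \to \psi$ follows directly from the tautology $\phi \to \phi$.

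For the inductive step, suppose $\psi$ is obtained in the derivation by applying Modus Ponens to two earlier formulas $\chi$ and $\chi \to \psi$. By the induction hypothesis, $X \vdash \phi \to \chi$ and $X \vdash \phi \to (\chi \to \psi)$. Applying the propositional tautology
\[
(\phi \to (\chi \to \psi)) \to ((\phi \to \chi) \to (\phi \to \psi))
\]
together with two uses of Modus Ponens gives $X \vdash \phi \to \psi$, completing the induction.

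There is no genuine obstacle here: the only point worth emphasizing in the write-up is exactly the one flagged above, namely that $\vdash$ from a set of assumptions is restricted to Modus Ponens. If Necessitation were permitted at this stage, one would face the usual problem that $\phi \vdash \K_C \phi$ while $\nvdash \phi \to \K_C \phi$, and the statement of the lemma would fail. Since the definition explicitly excludes this, no extra work is needed.
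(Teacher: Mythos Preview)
Your proposal is correct and essentially identical to the paper's own proof: both proceed by induction on the length of the derivation, handle the same three base cases (theorem, member of $X$, the hypothesis $\phi$ itself) and the Modus Ponens case using the same propositional tautologies, and both explicitly flag that the restriction to Modus Ponens is what makes the argument go through. There is nothing to add.
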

\begin{proof}
Suppose that sequence $\psi_1,\dots,\psi_n$ is a proof from set $X\cup\{\phi\}$ and the theorems of our logical system that uses the Modus Ponens inference rule only. In other words, for each $k\le n$, either
\begin{enumerate}
    \item $\vdash\psi_k$, or
    \item $\psi_k\in X$, or
    \item $\psi_k$ is equal to $\phi$, or
    \item there are $i,j<k$ such that formula $\psi_j$ is equal to $\psi_i\to\psi_k$.
\end{enumerate}
It suffices to show that $X,\phi\vdash\psi_k$ for each $k\le n$. We prove this by induction on $k$ through considering the four cases above separately.

\vspace{1mm}
\noindent{\bf Case 1}: $\vdash\psi_k$. Note that $\psi_k\to(\phi\to\psi_k)$ is a propositional tautology, and thus, is an axiom of our logical system. Hence, $\vdash\phi\to\psi_k$ by the Modus Ponens inference rule. Therefore, $X\vdash\phi\to\psi_k$. 

\vspace{1mm}
\noindent{\bf Case 2}: $\psi_k\in X$. Then, $X\vdash\psi_k$.

\vspace{1mm}
\noindent{\bf Case 3}: formula $\psi_k$ is equal to $\phi$. Thus, $\phi\to\psi_k$ is a propositional tautology. Therefore, $X\vdash\phi\to\psi_k$. 

\vspace{1mm}
\noindent{\bf Case 4}:  formula $\psi_j$ is equal to $\psi_i\to\psi_k$ for some $i,j<k$. Thus, by the induction hypothesis, $X\vdash\phi\to\psi_i$ and $X\vdash\phi\to(\psi_i\to\psi_k)$. Note that formula 
$
(\phi\to\psi_i)\to((\phi\to(\psi_i\to\psi_k))\to(\phi\to\psi_k))
$
is a propositional tautology. Therefore, $X\vdash \phi\to\psi_k$ by applying the Modus Ponens inference rule twice.
\end{proof}

Note that it is important for the above proof that $X\vdash\phi$ stands for derivability only using the Modus Ponens inference rule. For example, if the Necessitation inference rule is allowed, then the proof will have to include one more case where $\psi_k$ is formula $\K_C\psi_i$ for some coalition $C\subseteq\mathcal{A}$, and some integer $i< k$. In this case we will need to prove that if $X\vdash \phi\to\psi_i$, then $X\vdash \phi\to\K_C\psi_i$, which is not true.

\begin{lemma}[Lindenbaum]\label{Lindenbaum's lemma}
Any consistent set of formulae can be extended to a maximal consistent set of formulae.
\end{lemma}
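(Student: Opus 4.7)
The plan is the standard Lindenbaum construction, so the essential work is organizing it around the specific $\vdash$-relation of this paper, which (by Lemma~\ref{deduction lemma}'s proof) uses only Modus Ponens applied to theorems and members of $X$. First I would observe the finite-character property of $\vdash$: if $X \vdash \bot$, then there is a finite subset $X_0 \subseteq X$ with $X_0 \vdash \bot$. This is immediate because any proof sequence is finite and can cite only finitely many members of $X$. Consequently, $X$ is consistent iff every finite subset of $X$ is consistent, which is the compactness hook I will need for the union step.

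Next, suppose $X \subseteq \Phi$ is consistent. Assuming the set of propositional variables and the set $\mathcal{A}$ of agents are at most countable (so $\Phi$ is countable), I would enumerate $\Phi = \{\phi_1, \phi_2, \dots\}$ and define a chain $X_0 \subseteq X_1 \subseteq \dots$ by
\[
X_0 = X, \qquad X_{n+1} = \begin{cases} X_n \cup \{\phi_{n+1}\} & \text{if } X_n \cup \{\phi_{n+1}\} \nvdash \bot,\\ X_n & \text{otherwise.} \end{cases}
\]
Each $X_n$ is consistent by induction on $n$. I then set $\hat X = \bigcup_{n\ge 0} X_n$; clearly $X \subseteq \hat X$. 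If the cardinality assumption fails, I would substitute Zorn's lemma applied to the poset of consistent extensions of $X$ ordered by inclusion, where the finite-character observation above guarantees that the union of any chain of consistent sets is consistent, giving the required upper bound.

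To finish, I would verify the two defining properties. For consistency of $\hat X$: if $\hat X \vdash \bot$, then by finite character some finite $F \subseteq \hat X$ satisfies $F \vdash \bot$, and since $F$ is finite there exists $n$ with $F \subseteq X_n$, whence $X_n \vdash \bot$, contradicting the inductive consistency of $X_n$. For maximality: given any $\phi \in \Phi$, pick $n$ with $\phi = \phi_n$; then either $\phi \in X_n \subseteq \hat X$, or else $X_{n-1} \cup \{\phi\} \vdash \bot$, in which case Lemma~\ref{deduction lemma} yields $X_{n-1} \vdash \neg\phi$, so $\neg\phi \in \hat X$. Hence $\hat X$ contains $\phi$ or $\neg\phi$ for every formula $\phi$, making it maximal consistent.

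The only real obstacle is the consistency-preservation argument at the union stage, and that reduces entirely to the finite character of $\vdash$; since derivations here use only Modus Ponens on top of theorems, this is immediate. The deduction lemma is what lets me convert ``$X_{n-1}\cup\{\phi\}$ is inconsistent'' into ``$\neg\phi \in X_{n-1}$-extension'', so the proof plugs directly into the machinery already developed above.
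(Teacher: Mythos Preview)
Your argument is the standard Lindenbaum construction, which is exactly what the paper invokes: its proof simply states that ``the standard proof of Lindenbaum's lemma applies here'' and cites \cite[Proposition 2.14]{m09}. Your write-up is correct and matches that approach; the only minor elision is the step from $X_{n-1}\vdash\neg\phi$ to $\neg\phi\in\hat X$, which follows once you combine it with the already-established consistency of $\hat X$ and the same case analysis applied to $\neg\phi$.
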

\begin{proof}
The standard proof of Lindenbaum's lemma applies here~\cite[Proposition 2.14]{m09}. 
\end{proof}

\section{Examples of Derivations}\label{examples section}


We prove the soundness of the axioms of our logical system in the next section. Here we prove several lemmas about our formal system that are used later in the proof of the completeness. 



\begin{lemma}\label{alt fairness lemma}
$\vdash\cK_C\B_C\phi\to(\phi\to\B_C\phi)$.
\end{lemma}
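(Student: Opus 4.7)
The target formula $\cK_C\B_C\phi\to(\phi\to\B_C\phi)$ unfolds, using the definition $\cK_C\psi\equiv\neg\K_C\neg\psi$, to $\neg\K_C\neg\B_C\phi\to(\phi\to\B_C\phi)$. My plan is to reduce this to a chain built from the Knowledge of Fairness, Negative Introspection, Truth, and Distributivity axioms, plus Necessitation.

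First, I would start with the Knowledge of Fairness axiom, $\B_C\phi\to\K_C(\phi\to\B_C\phi)$, and take its contrapositive to obtain $\neg\K_C(\phi\to\B_C\phi)\to\neg\B_C\phi$. Applying Necessitation and then the Distributivity axiom, this yields $\K_C\neg\K_C(\phi\to\B_C\phi)\to\K_C\neg\B_C\phi$. Next, I invoke the Negative Introspection axiom (with $\phi\to\B_C\phi$ as the formula inside) to get $\neg\K_C(\phi\to\B_C\phi)\to\K_C\neg\K_C(\phi\to\B_C\phi)$. Chaining these two implications propositionally gives $\neg\K_C(\phi\to\B_C\phi)\to\K_C\neg\B_C\phi$, whose contrapositive is $\neg\K_C\neg\B_C\phi\to\K_C(\phi\to\B_C\phi)$, i.e., $\cK_C\B_C\phi\to\K_C(\phi\to\B_C\phi)$.

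Finally, the Truth axiom instantiated at $\phi\to\B_C\phi$ gives $\K_C(\phi\to\B_C\phi)\to(\phi\to\B_C\phi)$. Composing the last two implications by propositional reasoning produces the desired $\cK_C\B_C\phi\to(\phi\to\B_C\phi)$.

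I do not expect any serious obstacle here; the proof is a routine manipulation in the S5 fragment combined with one application of Knowledge of Fairness. The only thing that requires care is making sure every intermediate step is obtained using Modus Ponens and Necessitation from axioms (so that the step goes through in the $\vdash$ sense), and remembering that $\cK_C$ is definable from $\K_C$ rather than being an independent primitive.
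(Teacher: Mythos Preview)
Your proposal is correct and follows essentially the same route as the paper's own proof: Knowledge of Fairness, contrapositive, Necessitation plus Distributivity, Negative Introspection to bridge, contrapositive again, then Truth to finish. The steps and their order match almost verbatim.
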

\begin{proof}
Note that $\vdash \B_C\phi\to\K_C(\phi\to\B_C\phi)$ by the Knowledge of Fairness axiom. Thus, $\vdash \neg\K_C(\phi\to\B_C\phi)\to \neg\B_C\phi$, by the law of contrapositive. Then, $\vdash \K_C(\neg\K_C(\phi\to\B_C\phi)\to \neg\B_C\phi)$ by the Necessitation inference rule. Hence, by the Distributivity axiom and the Modus Ponens inference rule,
$$\vdash \K_C\neg\K_C(\phi\to\B_C\phi)\to \K_C\neg\B_C\phi.$$ 
At the same time, by the Negative Introspection axiom:
$$
\vdash \neg\K_C(\phi\to\B_C\phi)\to\K_C\neg\K_C(\phi\to\B_C\phi).
$$
Then, by the laws of propositional reasoning,
$$\vdash \neg\K_C(\phi\to\B_C\phi)\to \K_C\neg\B_C\phi.$$
Thus, by the law of contrapositive,
$$\vdash \neg\K_C\neg\B_C\phi\to \K_C(\phi\to\B_C\phi).$$
Since $\K_C(\phi\to\B_C\phi)\to(\phi\to\B_C\phi)$ is an instance of the Truth axiom, by propositional reasoning,
$$\vdash \neg\K_C\neg\B_C\phi\to (\phi\to\B_C\phi).$$
Therefore, $\vdash \cK_C\B_C\phi\to (\phi\to\B_C\phi)$ by the definition of $\cK_C$.
\end{proof}

\begin{lemma}\label{alt cause lemma}
If $\vdash \phi\leftrightarrow \psi$, then $\vdash \B_C\phi\to\B_C\psi$.
\end{lemma}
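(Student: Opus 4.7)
The plan is to apply the Blame for Known Cause axiom using the $\psi\to\phi$ direction of the hypothesized biconditional, and then to supply the antecedent $\psi$ required by that axiom through the Truth axiom together with the $\phi\to\psi$ direction.

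First, from the assumption $\vdash\phi\leftrightarrow\psi$ I would extract $\vdash\psi\to\phi$ and lift it via the Necessitation inference rule to $\vdash\K_C(\psi\to\phi)$. Then the Blame for Known Cause axiom, instantiated with $\phi$ and $\psi$ swapped, reads
$$
\K_C(\psi\to\phi)\to(\B_C\phi\to(\psi\to\B_C\psi)),
$$
so Modus Ponens yields $\vdash\B_C\phi\to(\psi\to\B_C\psi)$.

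To discharge the intermediate $\psi$, I would combine the Truth axiom $\vdash\B_C\phi\to\phi$ with the other direction $\vdash\phi\to\psi$ of the biconditional, obtaining $\vdash\B_C\phi\to\psi$ by propositional reasoning. Chaining this with $\vdash\B_C\phi\to(\psi\to\B_C\psi)$ via standard propositional logic gives the desired $\vdash\B_C\phi\to\B_C\psi$.

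There is no real obstacle here: the argument is a short composition of one instance of Blame for Known Cause, one application of Necessitation, and the Truth axiom, with the two halves of the biconditional used in complementary roles. The only point worth flagging is that the axiom is applied with $\phi$ and $\psi$ interchanged relative to its statement, so one must be careful that the direction $\psi\to\phi$ (not $\phi\to\psi$) is the one boxed by $\K_C$.
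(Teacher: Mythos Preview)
Your proposal is correct and follows essentially the same argument as the paper: apply Necessitation to $\psi\to\phi$, invoke the Blame for Known Cause axiom (with the roles of $\phi$ and $\psi$ swapped) to obtain $\vdash\B_C\phi\to(\psi\to\B_C\psi)$, then supply $\psi$ via the Truth axiom together with $\phi\to\psi$. The paper merely records one intermediate propositional step explicitly, but the structure and ingredients are identical.
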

\begin{proof}
Assumption  $\vdash \phi\leftrightarrow \psi$ implies $\vdash \psi\to \phi$ by the laws of propositional reasoning. Hence, $\vdash \K_C(\psi\to \phi)$ by the Necessitation inference rule. Thus, $\vdash \B_C\phi\to(\psi\to \B_C\psi)$ 
by the Blame for Known Cause axiom and the Modus Ponens inference rule. Hence, $\vdash \psi\to(\B_C\phi\to \B_C\psi)$ by propositional reasoning. 
Then, again by propositional reasoning,
\begin{equation}\label{sofia}
\vdash (\B_C\phi\to\psi)\to (\B_C\phi\to \B_C\psi).
\end{equation}
Observe that $\vdash \B_C\phi\to\phi$ by the Truth axiom. Also, $\vdash \phi\leftrightarrow \psi$ by the assumption of the lemma. Then, by the laws of propositional reasoning, $\vdash \B_C\phi\to\psi$. Therefore,
$
\vdash \B_C\phi\to \B_C\psi
$
by the Modus Ponens inference rule from statement~(\ref{sofia}).
\end{proof}

The next lemma states a well-known S5 principle that we use several times in the proofs that follow.

\begin{lemma}\label{add cK lemma}
$\phi\vdash \cK_C\phi$.
\end{lemma}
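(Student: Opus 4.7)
The plan is to reduce the claim to an application of Modus Ponens against the theorem $\vdash \phi \to \cK_C\phi$, which is nothing more than the contrapositive of the Truth axiom. Recall that $\cK_C\phi$ is defined as $\neg\K_C\neg\phi$, so we need to derive $\neg\K_C\neg\phi$ from the additional assumption $\phi$ using only Modus Ponens together with theorems of the system.

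First I would note that the Truth axiom gives $\vdash \K_C\neg\phi \to \neg\phi$. Taking the contrapositive (a propositional tautology, hence an axiom of the system that can be combined by Modus Ponens), one obtains $\vdash \neg\neg\phi \to \neg\K_C\neg\phi$, and then by double negation $\vdash \phi \to \neg\K_C\neg\phi$, i.e., $\vdash \phi \to \cK_C\phi$. All of these steps use only propositional tautologies together with the Truth axiom, combined via Modus Ponens, so the resulting formula is a genuine theorem of the system.

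Once this theorem is in hand, the lemma follows immediately: given the additional assumption $\phi$ in the set on the left of $\vdash$, one application of Modus Ponens with the theorem $\phi \to \cK_C\phi$ yields $\cK_C\phi$, as required. There is no real obstacle here; the only subtlety worth emphasizing is that the derivation must be carried out using Modus Ponens alone (since the statement $\phi \vdash \cK_C\phi$ disallows Necessitation on the hypothesis $\phi$), and indeed our proof does so, because the only use of Necessitation-like reasoning is absorbed inside the standalone theorem $\vdash \phi \to \cK_C\phi$.
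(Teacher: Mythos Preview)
Your proof is correct and follows essentially the same approach as the paper: both derive $\vdash \phi\to\cK_C\phi$ by taking the contrapositive of the Truth axiom instance $\K_C\neg\phi\to\neg\phi$, then apply Modus Ponens to the hypothesis $\phi$. The only difference is that you make the intermediate double-negation step explicit, which the paper absorbs into the phrase ``by the law of contrapositive.''
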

\begin{proof}
By the Truth axioms, $\vdash\K_C\neg\phi\to\neg\phi$. Hence, by the law of contrapositive, $\vdash\phi\to \neg\K_C\neg\phi$. Thus, $\vdash\phi\to \cK_C\phi$ by the definition of the modality $\cK_C$. Therefore, $\phi\vdash \cK_C\phi$ by the Modus Ponens inference rule.
\end{proof}

The next lemma generalizes the Joint Responsibility axiom from two coalitions to multiple coalitions. Informally, it says that if the disjunction $\chi_1\vee\dots\vee\chi_n$ is true and each of the disjoint coalitions $D_1,\dots,D_n$
cannot exclude a possibility of being blamed for the corresponding disjunct, then together they should be blamed for the disjunction.

\begin{lemma}\label{super joint responsibility lemma}
For any integer $n\ge 0$ and any pairwise disjoint sets $D_1,\dots,D_n$,
$$
\{\cK_{D_i}\B_{D_i}\chi_i\}_{i=1}^n,\;\chi_1\vee\dots\vee\chi_n
\vdash \B_{D_1\cup\dots\cup D_n}(\chi_1\vee \dots\vee\chi_n).
$$
\end{lemma}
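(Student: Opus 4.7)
The plan is to handle the boundary cases $n=0$ and $n=1$ directly and to reduce the general case to a stronger auxiliary claim: for every nonempty $I \subseteq \{1,\dots,n\}$ and every $k \in I$,
$$
\{\cK_{D_i}\B_{D_i}\chi_i\}_{i \in I},\; \chi_k \vdash \B_{\bigcup_{i \in I} D_i}\left(\bigvee_{i \in I}\chi_i\right).
$$
Given this auxiliary claim with $I = \{1,\dots,n\}$ applied separately to each $k$, the lemma follows by propositional disjunction elimination on the hypothesis $\chi_1 \vee \dots \vee \chi_n$: each disjunct $\chi_k$ forces the same conclusion.

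I would prove the auxiliary claim by induction on $|I|$. The base case $|I|=1$, with $I = \{k\}$, reduces to $\cK_{D_k}\B_{D_k}\chi_k,\chi_k \vdash \B_{D_k}\chi_k$, which is immediate from Lemma~\ref{alt fairness lemma}. For the step $|I| \ge 2$, I would pick some $j \in I \setminus \{k\}$ and set $I' = I \setminus \{j\}$, so that $k \in I'$. The induction hypothesis applied to $I'$ gives $\B_{\bigcup_{i \in I'} D_i}(\bigvee_{i \in I'} \chi_i)$, which Lemma~\ref{add cK lemma} upgrades to $\cK_{\bigcup_{i \in I'} D_i}\B_{\bigcup_{i \in I'} D_i}(\bigvee_{i \in I'} \chi_i)$. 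Because $D_1,\dots,D_n$ are pairwise disjoint, the coalitions $\bigcup_{i \in I'} D_i$ and $D_j$ are disjoint, so combining this $\cK\B$ formula with the hypothesis $\cK_{D_j}\B_{D_j}\chi_j$ via the Joint Responsibility axiom yields the implication $\bigvee_{i \in I'} \chi_i \vee \chi_j \to \B_{\bigcup_{i \in I} D_i}(\bigvee_{i \in I'} \chi_i \vee \chi_j)$. Since $k \in I'$, the standing assumption $\chi_k$ discharges the antecedent, and Lemma~\ref{alt cause lemma} rearranges the resulting disjunction into the canonical form $\bigvee_{i \in I} \chi_i$.

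The main obstacle is that the hypothesis $\chi_1 \vee \dots \vee \chi_n$ does not supply any particular sub-disjunction, so a direct induction on $n$ that strips off $\chi_n$ would need access to $\chi_1 \vee \dots \vee \chi_{n-1}$, which need not hold. By instead fixing a witness $k$ through case analysis and inducting on the size of the index set, the assumption $\chi_k$ is preserved throughout; this ensures that every intermediate sub-disjunction is in fact true, so the antecedents produced by Joint Responsibility are discharged cleanly at every step. The boundary case $n=0$ is handled by noting that the empty disjunction is $\bot$, from which anything follows; the case $n=1$ is exactly Lemma~\ref{alt fairness lemma}.
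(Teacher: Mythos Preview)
Your proposal is correct and takes a genuinely different route from the paper's own proof. Both arguments must confront the same obstacle you identified: a naive induction on $n$ that peels off $\chi_n$ would require the sub-disjunction $\chi_1\vee\dots\vee\chi_{n-1}$ as an intermediate hypothesis, and that need not follow from $\chi_1\vee\dots\vee\chi_n$. You and the paper resolve this differently. You perform an $n$-way case split on the full disjunction first, fixing a true witness $\chi_k$; your auxiliary claim then grows the index set one element at a time while keeping $\chi_k$ available to discharge each Joint Responsibility antecedent, and Lemma~\ref{alt cause lemma} absorbs the reordering at the end. The paper instead inducts directly on $n$ in the original statement, but applies the induction hypothesis \emph{twice}, once to $\{1,\dots,n-1\}$ and once to $\{2,\dots,n\}$, combining each with the leftover singleton via Joint Responsibility; the resulting two derivations are glued together by the tautology $\chi_1\vee\dots\vee\chi_n\to(\chi_1\vee\dots\vee\chi_{n-1})\vee(\chi_2\vee\dots\vee\chi_n)$, valid for $n\ge 2$. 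Your approach trades a wider (but trivial) case split for a cleaner single-track induction; the paper's approach keeps the case split binary at the cost of invoking the induction hypothesis on two overlapping sub-instances. Both are sound, and both rely on Lemma~\ref{alt fairness lemma}, Lemma~\ref{add cK lemma}, Lemma~\ref{alt cause lemma}, and the Joint Responsibility axiom in essentially the same places.
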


\begin{proof}
We prove the lemma by induction on $n$. If $n=0$, then disjunction $\chi_1\vee\dots\vee \chi_n$ is Boolean constant false $\bot$. Hence, the statement of the lemma, $\bot\vdash\B_\varnothing\bot$, is provable in the propositional logic.

Next, assume that $n=1$. Then, from Lemma~\ref{alt fairness lemma} using Modus Ponens rule twice, we get
$\cK_{D_1}\B_{D_1}\chi_1,\chi_1\vdash\B_{D_1}\chi_1$ .

Assume that $n\ge 2$. By the assumption of the lemma that sets $D_1,\dots,D_n$ are pairwise disjoint, the Joint Responsibility axiom, and the Modus Ponens inference rule,
\begin{eqnarray*}
&&\hspace{-8mm}\cK_{D_1\cup \dots \cup D_{n-1}}\B_{D_1\cup \dots \cup D_{n-1}}(\chi_1\vee\dots\vee\chi_{n-1}),
\cK_{D_n}\B_{D_n}\chi_n,\;\chi_1\vee\dots\vee\chi_{n-1}\vee\chi_n\\
&&\hspace{0mm}\vdash \B_{D_1\cup \dots \cup D_{n-1}\cup D_n}(\chi_1\vee\dots\vee\chi_{n-1}\vee \chi_n).
\end{eqnarray*}
Hence, by Lemma~\ref{add cK lemma},
\begin{eqnarray*}
&&\hspace{-8mm}\B_{D_1\cup \dots \cup D_{n-1}}(\chi_1\vee\dots\vee\chi_{n-1}),
\cK_{D_n}\B_{D_n}\chi_n,\;\chi_1\vee\dots\vee\chi_{n-1}\vee\chi_n\\
&&\vdash \B_{D_1\cup \dots \cup D_{n-1}\cup D_n}(\chi_1\vee\dots\vee\chi_{n-1}\vee \chi_n).
\end{eqnarray*}
At the same time, by the induction hypothesis,
$$\{\cK_{D_i}\B_{D_i}\chi_i\}_{i=1}^{n-1},\;\chi_1\vee\dots\vee\chi_{n-1}
\vdash \B_{D_1\cup\dots\cup D_{n-1}}(\chi_1\vee \dots\vee\chi_{n-1}).$$
Thus,
\begin{eqnarray*}
&&\hspace{-5mm}\{\cK_{D_i}\B_{D_i}\chi_i\}_{i=1}^n,\;\chi_1\vee\dots\vee\chi_{n-1},\;\chi_1\vee\dots\vee\chi_{n-1}\vee\chi_n\\
&&\vdash \B_{D_1\cup\dots\cup D_{n-1}\cup D_n}(\chi_1\vee \dots\vee\chi_{n-1}\vee\chi_n).
\end{eqnarray*}
Note that $\chi_1\vee\dots\vee\chi_{n-1}\vdash\chi_1\vee\dots\vee\chi_{n-1}\vee\chi_n$ is provable in the propositional logic. Thus,
\begin{eqnarray}
&&\hspace{-15mm}\{\cK_{D_i}\B_{D_i}\chi_i\}_{i=1}^n,\;\chi_1\vee\dots\vee\chi_{n-1}
 \vdash \B_{D_1\cup\dots\cup D_{n-1}\cup D_n}(\chi_1\vee \dots\vee\chi_{n-1}\vee\chi_n).\label{part 1}
\end{eqnarray}
Similarly, by the Joint Responsibility axiom and the Modus Ponens inference rule,
\begin{eqnarray*}
&&\hspace{-8mm}\cK_{D_1}\B_{D_1}\chi_1,\cK_{D_2\cup \dots \cup D_n}\B_{D_2\cup \dots \cup D_n}(\chi_2\vee\dots\vee\chi_n),\;\chi_1\vee(\chi_2\vee\dots\vee\chi_n)\\
&&\hspace{-0mm}\vdash \B_{D_1\cup \dots \cup D_{n-1}\cup D_n}(\chi_1\vee(\chi_2\vee\dots\vee \chi_n)).
\end{eqnarray*}
Because formula 
$\chi_1\vee(\chi_2\vee\dots\vee \chi_n)\leftrightarrow \chi_1\vee\chi_2\vee\dots\vee \chi_n$ is provable in the propositional logic, by Lemma~\ref{alt cause lemma},
\begin{eqnarray*}
&&\hspace{-7mm}\cK_{D_1}\B_{D_1}\chi_1,\;\cK_{D_2\cup \dots \cup D_n}\B_{D_2\cup \dots \cup D_n}(\chi_2\vee\dots\vee\chi_n),\;\chi_1\vee\chi_2\vee\dots\vee\chi_n\\
&&\hspace{0mm}\vdash \B_{D_1\cup \dots \cup D_{n-1}\cup D_n}(\chi_1\vee\chi_2\vee\dots\vee \chi_n).
\end{eqnarray*}
Hence, by Lemma~\ref{add cK lemma},
\begin{eqnarray*}
&&\hspace{-7mm}\cK_{D_1}\B_{D_1}\chi_1,\;\B_{D_2\cup \dots \cup D_n}(\chi_2\vee\dots\vee\chi_n),\;\chi_1\vee\chi_2\vee\dots\vee\chi_n\\
&&\vdash \B_{D_1\cup \dots \cup D_{n-1}\cup D_n}(\chi_1\vee\chi_2\vee\dots\vee \chi_n).
\end{eqnarray*}
At the same time, by the induction hypothesis,
$$
\{\cK_{D_i}\B_{D_i}\chi_i\}_{i=2}^n,\;\chi_2\vee\dots\vee\chi_n
\vdash \B_{D_2\cup\dots\cup D_n}(\chi_2\vee \dots\vee\chi_n).
$$
Thus,
\begin{eqnarray*}
&&\hspace{-8mm}\{\cK_{D_i}\B_{D_i}\chi_i\}_{i=1}^n,\;\chi_2\vee\dots\vee\chi_n,\;\chi_1\vee\chi_2\vee\dots\vee\chi_n\\
&&\vdash \B_{D_1\cup D_2\cup\dots\cup D_n}(\chi_1\vee\chi_2\vee\dots\vee\chi_n).
\end{eqnarray*}
Note that $\chi_2\vee\dots\vee\chi_{n}\vdash\chi_1\vee\dots\vee\chi_{n-1}\vee\chi_n$ is provable in the propositional logic. Thus,
\begin{eqnarray}
&&\hspace{-15mm}\{\cK_{D_i}\B_{D_i}\chi_i\}_{i=1}^n,\;\chi_2\vee\dots\vee\chi_n \vdash \B_{D_1\cup\dots\cup D_{n-1}\cup D_n}(\chi_1\vee\chi_2\vee\dots\vee\chi_n).\label{part 2}
\end{eqnarray}
Finally, note that the following statement is provable in the propositional logic for $n\ge 2$,
$$
\vdash\chi_1\vee\dots\vee\chi_n\to(\chi_1\vee\dots\vee\chi_{n-1})\vee 
(\chi_2\vee\dots\vee\chi_n).
$$
Therefore, from statement~(\ref{part 1}) and statement~(\ref{part 2})
$$
\{\cK_{D_i}\B_{D_i}\chi_i\}_{i=1}^n,\;\chi_1\vee\dots\vee\chi_n
\vdash \B_{D_1\cup\dots\cup D_n}(\chi_1\vee \dots\vee\chi_n).
$$
by the laws of propositional reasoning.
\end{proof}

\begin{lemma}\label{super distributivity}
If $\phi_1,\dots,\phi_n\vdash\psi$, then $\K_C\phi_1,\dots,\K_C\phi_n\vdash\K_C\psi$.
\end{lemma}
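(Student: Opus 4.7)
The plan is to proceed by a straightforward induction on $n$, using the deduction lemma to collapse the hypotheses into a single implication, then the Necessitation rule, and finally the Distributivity axiom $n$ times to peel the $\K_C$ off each antecedent.

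First, I would note that from the hypothesis $\phi_1,\dots,\phi_n\vdash\psi$, since derivability in $X\vdash\cdot$ only permits Modus Ponens (so Lemma~\ref{deduction lemma} applies), $n$ applications of the deduction lemma yield
$$
\vdash \phi_1\to(\phi_2\to\dots\to(\phi_n\to\psi)\dots).
$$
Applying the Necessitation inference rule gives
$$
\vdash \K_C\bigl(\phi_1\to(\phi_2\to\dots\to(\phi_n\to\psi)\dots)\bigr).
$$

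Next I would peel off the $\K_C$ layer by layer. The Distributivity axiom states $\K_C(\alpha\to\beta)\to(\K_C\alpha\to\K_C\beta)$, so from the previous line together with $\K_C\phi_1$ (available as a hypothesis) and two applications of Modus Ponens, I obtain
$$
\K_C\phi_1\vdash \K_C\bigl(\phi_2\to\dots\to(\phi_n\to\psi)\dots\bigr).
$$
Iterating this procedure $n-1$ more times, using the Distributivity axiom together with each hypothesis $\K_C\phi_i$ in turn, strips away each antecedent and eventually yields
$$
\K_C\phi_1,\dots,\K_C\phi_n\vdash \K_C\psi,
$$
as desired. Formally this iteration is itself an induction on $n$, with the base case $n=0$ reducing to the trivial implication ``if $\vdash\psi$ then $\vdash\K_C\psi$'' given directly by Necessitation.

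There is no real obstacle here; the only subtlety worth flagging is the one already emphasized after the proof of Lemma~\ref{deduction lemma}, namely that the deduction lemma depends on $X\vdash\cdot$ being closed under Modus Ponens only. Since the Necessitation rule is applied in this argument only to a theorem (not to a formula derived from hypotheses), this restriction is respected, and the proof goes through cleanly.
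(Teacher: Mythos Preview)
Your proof is correct and follows essentially the same route as the paper's own argument: apply the deduction lemma $n$ times, then Necessitation, then iterate Distributivity plus Modus Ponens to strip off each $\K_C$-antecedent. Your added remarks about the $n=0$ base case and the restriction on Necessitation are accurate and do not diverge from the paper's approach.
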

\begin{proof}
By Lemma~\ref{deduction lemma} applied $n$ times, assumption $\phi_1,\dots,\phi_n\vdash\psi$ implies that
$
\vdash\phi_1\to(\phi_2\to\dots(\phi_n\to\psi)\dots).
$
Thus, by the Necessitation inference rule,
$$
\vdash\K_C(\phi_1\to(\phi_2\to\dots(\phi_n\to\psi)\dots)).
$$
Hence, by the Distributivity axiom and the Modus Ponens rule,
$$
\vdash\K_C\phi_1\to\K_C(\phi_2\to\dots(\phi_n\to\psi)\dots).
$$
Then, again by the Modus Ponens rule,
$$
\K_C\phi_1\vdash\K_C(\phi_2\to\dots(\phi_n\to\psi)\dots).
$$
Therefore, $\K_C\phi_1,\dots,\K_C\phi_n\vdash\K_C\psi$ by applying the previous steps $(n-1)$ more times.
\end{proof}

The following lemma states a well-known principle in epistemic logic. 

\begin{lemma}[Positive Introspection]\label{positive introspection lemma}
$\vdash \K_C\phi\to\K_C\K_C\phi$. 
\end{lemma}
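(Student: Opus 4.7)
The plan is to derive positive introspection from the Truth and Negative Introspection axioms using the standard S5 route, in which one squeezes $\K_C\K_C\phi$ between $\K_C\phi$ and the doubly-negated formula $\K_C\neg\K_C\neg\K_C\phi$.

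First I would obtain the upper bound. Take the Truth axiom in the form $\K_C\neg\K_C\phi \to \neg\K_C\phi$ and contrapose it to get $\K_C\phi \to \neg\K_C\neg\K_C\phi$. Next, instantiate the Negative Introspection axiom with $\neg\K_C\phi$ in place of $\phi$, which yields $\neg\K_C\neg\K_C\phi \to \K_C\neg\K_C\neg\K_C\phi$. Chaining these two implications by propositional reasoning gives
\[
\vdash \K_C\phi \to \K_C\neg\K_C\neg\K_C\phi.
\]

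Second, I would obtain the lower bound. Take Negative Introspection, $\neg\K_C\phi \to \K_C\neg\K_C\phi$, and contrapose it to get $\neg\K_C\neg\K_C\phi \to \K_C\phi$. Apply the Necessitation rule to this theorem, then the Distributivity axiom together with Modus Ponens, to conclude
\[
\vdash \K_C\neg\K_C\neg\K_C\phi \to \K_C\K_C\phi.
\]
Composing with the implication from the previous paragraph yields $\vdash \K_C\phi \to \K_C\K_C\phi$, as required.

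There is no real obstacle here; the only subtlety is to pick the right instance of Negative Introspection (namely with $\neg\K_C\phi$ substituted for the schema variable) so that the middle formula $\K_C\neg\K_C\neg\K_C\phi$ appears on both sides and can be eliminated by transitivity. Everything else is an application of the Truth axiom, the Distributivity axiom, Necessitation, and propositional reasoning, so the argument is short and needs no additional lemmas beyond what has already been established.
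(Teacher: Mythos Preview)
Your proof is correct and follows exactly the same route as the paper: derive $\K_C\phi\to\K_C\neg\K_C\neg\K_C\phi$ from Truth (contraposed) and an instance of Negative Introspection, then derive $\K_C\neg\K_C\neg\K_C\phi\to\K_C\K_C\phi$ by contraposing Negative Introspection, applying Necessitation, and using Distributivity with Modus Ponens. The two arguments are step-for-step the same.
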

\begin{proof}
Formula $\K_C\neg\K_C\phi\to\neg\K_C\phi$ is an instance of the Truth axiom. Thus, $\vdash \K_C\phi\to\neg\K_C\neg\K_C\phi$ by contraposition. Hence, taking into account the following instance of  the Negative Introspection axiom: $\neg\K_C\neg\K_C\phi\to\K_C\neg\K_C\neg\K_C\phi$,
we have 
\begin{equation}\label{pos intro eq 2}
\vdash \K_C\phi\to\K_C\neg\K_C\neg\K_C\phi.
\end{equation}

At the same time, $\neg\K_C\phi\to\K_C\neg\K_C\phi$ is an instance of the Negative Introspection axiom. Thus, $\vdash \neg\K_C\neg\K_C\phi\to \K_C\phi$ by the law of contrapositive in the propositional logic. Hence, by the Necessitation inference rule, 
$\vdash \K_C(\neg\K_C\neg\K_C\phi\to \K_C\phi)$. Thus, by  the Distributivity axiom and the Modus Ponens inference rule, 
$
  \vdash \K_C\neg\K_C\neg\K_C\phi\to \K_C\K_C\phi.
$
 The latter, together with statement~(\ref{pos intro eq 2}), implies the statement of the lemma by propositional reasoning.
\end{proof}

Our last example rephrases Lemma~\ref{super joint responsibility lemma} into the form which is used in the proof of the completeness.

\begin{lemma}\label{five plus plus}
For any $n\ge 0$ and any disjoint sets $D_1,\dots,D_n\subseteq C$,
$$
\{\cK_{D_i}\B_{D_i}\chi_i\}_{i=1}^n,\;\K_C(\phi\to\chi_1\vee\dots\vee\chi_n)\vdash\K_C(\phi\to\B_C\phi).
$$
\end{lemma}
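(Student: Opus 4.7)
The plan is to first derive the implication $\phi\to\B_C\phi$ under the lemma's hypotheses, and then push a $\K_C$ modality in front of it using Lemma~\ref{super distributivity}. To get $\phi\to\B_C\phi$, I would use the deduction lemma: assume $\phi$ in addition to the stated hypotheses and aim at $\B_C\phi$. From $\K_C(\phi\to\chi_1\vee\dots\vee\chi_n)$ the Truth axiom yields $\phi\to\chi_1\vee\dots\vee\chi_n$, and combined with the assumption $\phi$ we obtain $\chi_1\vee\dots\vee\chi_n$. Feeding this together with $\{\cK_{D_i}\B_{D_i}\chi_i\}_{i=1}^n$ into Lemma~\ref{super joint responsibility lemma} gives $\B_{D_1\cup\dots\cup D_n}(\chi_1\vee\dots\vee\chi_n)$, and the Monotonicity axiom for $\B$ (since $D_1\cup\dots\cup D_n\subseteq C$) upgrades this to $\B_C(\chi_1\vee\dots\vee\chi_n)$. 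Then the Blame for Known Cause axiom with $\psi:=\chi_1\vee\dots\vee\chi_n$ turns $\K_C(\phi\to\chi_1\vee\dots\vee\chi_n)$ and $\B_C(\chi_1\vee\dots\vee\chi_n)$ into $\phi\to\B_C\phi$, and the assumed $\phi$ yields $\B_C\phi$. Discharging $\phi$ via Lemma~\ref{deduction lemma} gives
$$
\{\cK_{D_i}\B_{D_i}\chi_i\}_{i=1}^n,\;\K_C(\phi\to\chi_1\vee\dots\vee\chi_n)\vdash \phi\to\B_C\phi.
$$

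Next, I would apply Lemma~\ref{super distributivity} to this derivation, which prepends $\K_C$ to every premise and to the conclusion, obtaining
$$
\{\K_C\cK_{D_i}\B_{D_i}\chi_i\}_{i=1}^n,\;\K_C\K_C(\phi\to\chi_1\vee\dots\vee\chi_n)\vdash \K_C(\phi\to\B_C\phi).
$$
To close the gap back to the original premises, I need two auxiliary facts: $\K_C(\phi\to\chi_1\vee\dots\vee\chi_n)\vdash \K_C\K_C(\phi\to\chi_1\vee\dots\vee\chi_n)$, which is immediate from Lemma~\ref{positive introspection lemma} (Positive Introspection), and $\cK_{D_i}\B_{D_i}\chi_i\vdash \K_C\cK_{D_i}\B_{D_i}\chi_i$ for each $i$.

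The main obstacle is the second fact, since $D_i$ and $C$ are different coalitions. My plan is to handle it as follows: unfold $\cK_{D_i}$ as $\neg\K_{D_i}\neg$ and apply the Negative Introspection axiom inside the coalition $D_i$ to derive $\cK_{D_i}\B_{D_i}\chi_i\to \K_{D_i}\cK_{D_i}\B_{D_i}\chi_i$. Then the Monotonicity axiom for $\K$, applied with $D_i\subseteq C$, transfers this knowledge from $D_i$ to $C$, yielding $\K_{D_i}\cK_{D_i}\B_{D_i}\chi_i\to \K_C\cK_{D_i}\B_{D_i}\chi_i$, and propositional reasoning chains these into the required implication. Combining everything by Modus Ponens produces the desired conclusion $\K_C(\phi\to\B_C\phi)$ from the original hypotheses.
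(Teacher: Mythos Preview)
Your proposal is correct and follows essentially the same route as the paper's proof: derive $\phi\to\B_C\phi$ from the hypotheses using Lemma~\ref{super joint responsibility lemma}, Monotonicity, and Blame for Known Cause, then apply Lemma~\ref{super distributivity} and close the gap with Positive Introspection, Negative Introspection, and Monotonicity (using $D_i\subseteq C$). The only cosmetic difference is the order in which you chain Negative Introspection and Monotonicity for the $\cK_{D_i}$ premises, which is immaterial.
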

\begin{proof}
By Lemma~\ref{super joint responsibility lemma},
$$
\{\cK_{D_i}\B_{D_i}\chi_i\}_{i=1}^n,\;\chi_1\vee\dots\vee\chi_n\vdash \B_{D_1\cup\dots\cup D_n}(\chi_1\vee\dots\vee\chi_n).
$$
Hence, by the Monotonicity axiom, 
$$
\{\cK_{D_i}\B_{D_i}\chi_i\}_{i=1}^n,\;\chi_1\vee\dots\vee\chi_n\vdash \B_{C}(\chi_1\vee\dots\vee\chi_n).
$$
Thus, by the Modus Ponens inference rule,
$$
\{\cK_{D_i}\B_{D_i}\chi_i\}_{i=1}^n,\;\phi,\;\phi\to\chi_1\vee\dots\vee\chi_n\vdash \B_C(\chi_1\vee\dots\vee\chi_n).
$$
By the Truth axiom and the Modus Ponens inference rule,
$$
\{\cK_{D_i}\B_{D_i}\chi_i\}_{i=1}^n,\;\phi,\;\K_C(\phi\to\chi_1\vee\dots\vee\chi_n)\vdash \B_C(\chi_1\vee\dots\vee\chi_n).
$$
The following formula is an instance of the Blame for Known Cause axiom $\K_C(\phi\to\chi_1\vee\dots\vee\chi_n)\to(\B_C(\chi_1\vee\dots\vee\chi_n)\to(\phi\to\B_C\phi))$. Hence, by the Modus Ponens inference rule applied twice,
$$
\{\cK_{D_i}\B_{D_i}\chi_i\}_{i=1}^n,\;\phi,\;\K_C(\phi\to\chi_1\vee\dots\vee\chi_n)\vdash\phi\to\B_C\phi.
$$
By the Modus Ponens inference rule,
$$
\{\cK_{D_i}\B_{D_i}\chi_i\}_{i=1}^n,\;\phi,\; \K_C(\phi\to\chi_1\vee\dots\vee\chi_n)\vdash\B_C\phi.
$$
By Lemma~\ref{deduction lemma},
$$
\{\cK_{D_i}\B_{D_i}\chi_i\}_{i=1}^n,\;\K_C(\phi\to\chi_1\vee\dots\vee\chi_n)\vdash\phi\to\B_C\phi.
$$
By Lemma~\ref{super distributivity},
$$
\{\K_C\cK_{D_i}\B_{D_i}\chi_i\}_{i=1}^n,\;\K_C\K_C(\phi\to\chi_1\vee\dots\vee\chi_n)\vdash\K_C(\phi\to\B_C\phi).
$$
By the Monotonicity axiom, the Modus Ponens inference rule, and the assumption $D_1,\dots,D_n\subseteq C$,
$$
\{\K_{D_i}\cK_{D_i}\B_{D_i}\chi_i\}_{i=1}^n,\;\K_C\K_C(\phi\to\chi_1\vee\dots\vee\chi_n)\vdash\K_C(\phi\to\B_C\phi).
$$
By the definition of modality $\cK$, the Negative Introspection axiom, and the Modus Ponens inference rule,
$$
\{\cK_{D_i}\B_{D_i}\chi_i\}_{i=1}^n,\;\K_C\K_C(\phi\to\chi_1\vee\dots\vee\chi_n)\vdash\K_C(\phi\to\B_C\phi).
$$
Therefore, by Lemma~\ref{positive introspection lemma} and the Modus Ponens inference rule, the statement of the lemma follows. 
\end{proof}

\section{Soundness}\label{soundness section}

The epistemic part of the Truth axiom as well as the Distribitivity, the Negative Introspection, and the Monotonicity axioms are the standard axioms of epistemic logic S5 for distributed knowledge. Their soundness follows from the assumption that $\sim_a$ is an equivalence relation in the standard way~\cite{fhmv95}. The soundness of the blameworthiness part of the Truth axiom  and of the Monotonicity axiom immediately follows from Definition~\ref{sat}. In this section, we prove the soundness of each of the remaining axioms as a separate lemma.
In these lemmas, $C,D\subseteq\mathcal{A}$ are coalitions, $\phi,\psi\in \Phi$ are formulae, and $(\alpha,\delta,\omega)\in P$ is a play of a  game $(I,\{\sim_a\}_{a\in\mathcal{A}},\Delta,\Omega,P,\pi)$.  




\begin{lemma}
$(\alpha,\delta,\omega)\nVdash \B_\varnothing\phi$. 
\end{lemma}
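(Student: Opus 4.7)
The plan is to argue by contradiction, exploiting the fact that both the indistinguishability relation $\sim_\varnothing$ and the action-profile agreement relation $=_\varnothing$ are vacuously total. Suppose for the sake of contradiction that $(\alpha,\delta,\omega)\Vdash \B_\varnothing\phi$. By item~5 of Definition~\ref{sat}, this means, first, that
$$
(\alpha,\delta,\omega)\Vdash\phi,
$$
and, second, that there is an action profile $s\in\Delta^\varnothing$ such that for every play $(\alpha',\delta',\omega')\in P$ with $\alpha\sim_\varnothing\alpha'$ and $s=_\varnothing\delta'$, we have $(\alpha',\delta',\omega')\nVdash\phi$.

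Next, I would observe that both side conditions of the universal quantifier are trivial for the empty coalition. The text right before Definition~\ref{sat} explicitly notes that $\alpha\sim_\varnothing\alpha'$ holds for any two initial states $\alpha,\alpha'\in I$, and the condition $s=_\varnothing\delta'$ likewise holds for any $\delta'\in\Delta^{\mathcal A}$ because there are no agents on which the two profiles need to agree. Hence the quantified assertion collapses to the unconditional claim that $(\alpha',\delta',\omega')\nVdash\phi$ for every play $(\alpha',\delta',\omega')\in P$.

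Instantiating this universal statement with the particular play $(\alpha',\delta',\omega')=(\alpha,\delta,\omega)\in P$ yields $(\alpha,\delta,\omega)\nVdash\phi$, contradicting the first consequence of the assumption. Therefore $(\alpha,\delta,\omega)\nVdash \B_\varnothing\phi$, as required. There is no genuine obstacle here; the only thing to be careful about is to cite, rather than rederive, the convention that $\sim_\varnothing$ is the total relation and to note that the unique element of $\Delta^\varnothing$ agrees with every complete action profile on the empty domain.
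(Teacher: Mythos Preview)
Your proof is correct and follows essentially the same approach as the paper: assume $(\alpha,\delta,\omega)\Vdash\B_\varnothing\phi$, note that the conditions $\alpha\sim_\varnothing\alpha'$ and $s=_\varnothing\delta'$ are vacuously satisfied, and instantiate the universal with the original play $(\alpha,\delta,\omega)$ to obtain the contradiction $(\alpha,\delta,\omega)\nVdash\phi$.
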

\begin{proof}
Assume that $(\alpha,\delta,\omega)\Vdash \B_\varnothing\phi$. Hence, by Definition~\ref{sat}, we have $(\alpha,\delta,\omega)\Vdash \phi$ and there is an action profile $s\in\Delta^\varnothing$ such that for each play $(\alpha',\delta',\omega')\in P$, if $\alpha\sim_\varnothing\alpha'$ and $s=_\varnothing\delta'$, then $(\alpha',\delta',\omega')\nVdash\phi$.

Let $\alpha'=\alpha$, $\delta'=\delta$, and $\omega'=\omega$. Since $\alpha\sim_\varnothing\alpha'$ and $s=_\varnothing\delta'$, by the choice of action profile $s$ we have $(\alpha',\delta',\omega')\nVdash\phi$. Then, $(\alpha,\delta,\omega)\nVdash\phi$, which leads to a contradiction.
\end{proof}

\begin{lemma}
$(\alpha,\delta,\omega)\nVdash \B_C\top$. 
\end{lemma}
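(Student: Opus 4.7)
The plan is a direct proof by contradiction using the totality requirement in item 5 of Definition~\ref{game definition} (every initial state together with every complete action profile yields at least one play), which is precisely the assumption the author flagged as what makes the Blamelessness of Truth axiom sound.

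First I would assume, toward a contradiction, that $(\alpha,\delta,\omega)\Vdash\B_C\top$. By item 5 of Definition~\ref{sat}, this yields an action profile $s\in\Delta^C$ of coalition $C$ such that for every play $(\alpha',\delta',\omega')\in P$ with $\alpha\sim_C\alpha'$ and $s=_C\delta'$, we have $(\alpha',\delta',\omega')\nVdash\top$.

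Next I would manufacture a single such play, which will give the contradiction because $\top$ is satisfied by every play (by the definition of $\top$ as $\neg\bot$ and item 2 of Definition~\ref{sat}). To do so, extend $s$ to a complete action profile $\delta'\in\Delta^{\mathcal{A}}$ by picking, for each agent $a\in\mathcal{A}\setminus C$, some action in $\Delta$; such an action exists since $\Delta$ is nonempty by item 3 of Definition~\ref{game definition}. Take $\alpha'=\alpha$. Then by item 5 of Definition~\ref{game definition}, there exists at least one $\omega'\in\Omega$ with $(\alpha,\delta',\omega')\in P$.

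Finally I would check the two conditions: $\alpha\sim_C\alpha$ holds since $\sim_a$ is an equivalence relation on $I$ for each $a\in C$ (item 2 of Definition~\ref{game definition}), and $s=_C\delta'$ holds by the construction of $\delta'$ as an extension of $s$. Hence the chosen $s$ forces $(\alpha,\delta',\omega')\nVdash\top$, contradicting the fact that $(\alpha,\delta',\omega')\Vdash\top$. The only subtle step is the extension of $s$ to a complete action profile, which is why the nonemptiness of $\Delta$ in Definition~\ref{game definition} matters; everything else is mechanical unpacking of Definition~\ref{sat}.
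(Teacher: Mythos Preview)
Your proposal is correct and follows essentially the same route as the paper's own proof: assume $(\alpha,\delta,\omega)\Vdash\B_C\top$, extract the witnessing profile $s$, extend it to a complete profile using nonemptiness of $\Delta$, invoke item~5 of Definition~\ref{game definition} to obtain a play $(\alpha,\delta',\omega')$, and derive the contradiction $(\alpha,\delta',\omega')\nVdash\top$ via reflexivity of $\sim_C$ and $s=_C\delta'$. The paper even singles out the same two ingredients you flag as essential (nonemptiness of $\Delta$ and the totality condition on plays).
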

\begin{proof}
Suppose that $(\alpha,\delta,\omega)\Vdash \B_C\top$. Thus, by Definition~\ref{sat}, there  is an action profile $s\in \Delta^C$ of coalition $C$ such that for each play $(\alpha',\delta',\omega')\in P$, if $\alpha\sim_C\alpha'$ and $s=_C\delta'$, then $(\alpha',\delta',\omega')\nVdash\top$.

Recall that the set of actions $\Delta$ is not empty by Definition~\ref{game definition}. Let $d_0$ be any action from set $\Delta$. Define a complete action profile $\delta'\in \Delta^\mathcal{A}$ as follows:
$$
\delta'(a)=
\begin{cases}
s(a), & \mbox{ if } a\in C,\\
d_0, & \mbox{ otherwise.}
\end{cases}
$$
By item 5 of Definition~\ref{game definition}, there is an outcome $\omega'\in \Omega$ such that $(\alpha,\delta',\omega')\in P$. Note that  $\alpha\sim_C\alpha$ because relation $\sim_C$ is an equivalence relation. Also $s=_C\delta'$ by the choice of the complete action profile $\delta'$. Therefore, by the choice of the action profile $s\in \Delta^C$, we have $(\alpha,\delta',\omega')\nVdash\top$, which contradicts Definition~\ref{sat}, taking into account the definition of the constant $\top$. 
\end{proof}

\begin{lemma}
If $C\cap D=\varnothing$, $(\alpha,\delta,\omega)\Vdash \cK_C\B_C\phi$, $(\alpha,\delta,\omega)\Vdash \cK_D\B_D\psi$, and $(\alpha,\delta,\omega)\Vdash \phi\vee\psi$, then $(\alpha,\delta,\omega)\Vdash \B_{C\cup D}(\phi\vee\psi)$.
\end{lemma}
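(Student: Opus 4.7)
The plan is to unpack both diamond-type hypotheses to produce witnessing action profiles for $C$ and $D$ separately, then glue them together (using $C\cap D=\varnothing$) into an action profile that simultaneously blocks $\phi$ and blocks $\psi$. The fact that $\sim_a$ is an equivalence relation, together with the monotonicity of the indistinguishability indexing $\alpha\sim_{C\cup D}\alpha'\Rightarrow\alpha\sim_C\alpha'\wedge\alpha\sim_D\alpha'$, will let the two witnesses continue to work when we enlarge the coalition.

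In detail, first I would unfold $(\alpha,\delta,\omega)\Vdash\cK_C\B_C\phi=\neg\K_C\neg\B_C\phi$ via item~4 of Definition~\ref{sat} to obtain a play $(\alpha_1,\delta_1,\omega_1)\in P$ with $\alpha\sim_C\alpha_1$ and $(\alpha_1,\delta_1,\omega_1)\Vdash\B_C\phi$. Item~5 of Definition~\ref{sat} then yields an action profile $s_1\in\Delta^C$ such that any play $(\alpha',\delta',\omega')\in P$ with $\alpha_1\sim_C\alpha'$ and $s_1=_C\delta'$ satisfies $(\alpha',\delta',\omega')\nVdash\phi$. Since $\sim_C$ is an equivalence relation, $\alpha\sim_C\alpha_1$ gives $\alpha_1\sim_C\alpha'\iff\alpha\sim_C\alpha'$, so the witness $s_1$ can be re-anchored at $\alpha$. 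In exactly parallel fashion, from $(\alpha,\delta,\omega)\Vdash\cK_D\B_D\psi$ I would produce a profile $s_2\in\Delta^D$ such that every play $(\alpha',\delta',\omega')\in P$ with $\alpha\sim_D\alpha'$ and $s_2=_D\delta'$ satisfies $(\alpha',\delta',\omega')\nVdash\psi$.

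Next, using $C\cap D=\varnothing$, I would define the combined profile $s\in\Delta^{C\cup D}$ by $s(a)=s_1(a)$ for $a\in C$ and $s(a)=s_2(a)$ for $a\in D$; disjointness is what makes this well-defined. To verify that $s$ witnesses $\B_{C\cup D}(\phi\vee\psi)$, take any $(\alpha',\delta',\omega')\in P$ with $\alpha\sim_{C\cup D}\alpha'$ and $s=_{C\cup D}\delta'$. Since $C\subseteq C\cup D$, we have $\alpha\sim_C\alpha'$ and $s_1=_C\delta'$, so by the first witness $(\alpha',\delta',\omega')\nVdash\phi$; symmetrically $(\alpha',\delta',\omega')\nVdash\psi$. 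Hence $(\alpha',\delta',\omega')\nVdash\phi\vee\psi$. Together with the hypothesis $(\alpha,\delta,\omega)\Vdash\phi\vee\psi$, this gives $(\alpha,\delta,\omega)\Vdash\B_{C\cup D}(\phi\vee\psi)$ by item~5 of Definition~\ref{sat}.

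There is no real obstacle here; the only place one has to be careful is the bookkeeping that lets the local witness $s_1$ obtained ``near $\alpha_1$'' be used against arbitrary plays indistinguishable from $\alpha$, which is handled once by the symmetry and transitivity of $\sim_C$. The disjointness assumption $C\cap D=\varnothing$ enters exactly at the well-definedness of the glued profile $s$, and nowhere else.
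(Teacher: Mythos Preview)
Your proposal is correct and follows essentially the same route as the paper's proof: unfold the two $\cK$-assumptions to obtain witnessing action profiles $s_1\in\Delta^C$ and $s_2\in\Delta^D$, glue them into $s\in\Delta^{C\cup D}$ using disjointness, and verify that $s$ prevents $\phi\vee\psi$ at every $(C\cup D)$-indistinguishable play. If anything, you are slightly more explicit than the paper about the re-anchoring step (using symmetry and transitivity of $\sim_C$ to pass from ``$\alpha_1\sim_C\alpha'$'' to ``$\alpha\sim_C\alpha'$''), which the paper leaves implicit.
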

\begin{proof}
Suppose that $(\alpha,\delta,\omega)\Vdash \cK_C\B_C\phi$ and $(\alpha,\delta,\omega)\Vdash \cK_D\B_D\psi$. Hence, by Definition~\ref{sat} and the definition of modality $\cK$, there are plays $(\alpha_1,\delta_1,\omega_1)\in P$ and $(\alpha_2,\delta_2,\omega_2)\in P$ such that $\alpha\sim_C\alpha_1$, $\alpha\sim_D\alpha_2$, $(\alpha_1,\delta_1,\omega_1)\Vdash \B_C\phi$ and $(\alpha_2,\delta_2,\omega_2)\Vdash \B_D\psi$.

Statement $(\alpha_1,\delta_1,\omega_1)\Vdash \B_C\phi$, by Definition~\ref{sat}, implies that there is a profile $s_1\in \Delta^C$ such that for each play $(\alpha',\delta',\omega')\in P$, if $\alpha_1\sim_C\alpha'$ and $s_1=_C\delta'$, then $(\alpha',\delta',\omega')\nVdash\phi$.

Similarly, statement $(\alpha_2,\delta_2,\omega_2)\Vdash \B_D\psi$, by Definition~\ref{sat}, implies that there is an action profile $s_2\in \Delta^D$ such that for each play $(\alpha',\delta',\omega')\in P$, if $\alpha_2\sim_D\alpha'$ and $s_2=_D\delta'$, then $(\alpha',\delta',\omega')\nVdash\psi$.

Consider an action profile $s$ of coalition $C\cup D$ such that
$$
s(a)=
\begin{cases}
s_1(a), & \mbox{ if } a\in C,\\
s_2(a), & \mbox{ if } a\in D.
\end{cases}
$$
The action profile $s$ is well-defined because sets $C$ and $D$ are disjoint by the assumption of the lemma. 

The choice of action profiles $s_1$, $s_2$, and $s$ implies that  for each play $(\alpha',\delta',\omega')\in P$, if $\alpha\sim_{C\cup D}\alpha'$ and $s=_{C\cup D}\delta'$, then $(\alpha',\delta',\omega')\nVdash\phi$ and $(\alpha',\delta',\omega')\nVdash\psi$. 
Thus, if $\alpha\sim_{C\cup D}\alpha'$ and $s=_{C\cup D}\delta'$, then $(\alpha',\delta',\omega')\nVdash\phi\vee\psi$, for each play $(\alpha',\delta',\omega')\in P$. 
Therefore, $(\alpha,\delta,\omega)\Vdash \B_{C\cup D}(\phi\vee\psi)$  by Definition~\ref{sat} and the assumption $(\alpha,\delta,\omega)\Vdash \phi\vee\psi$ of the lemma.
\end{proof}

\begin{lemma}
If $(\alpha,\delta,\omega)\Vdash \K_C(\phi\to\psi)$, $(\alpha,\delta,\omega)\Vdash \B_C\psi$, and $(\alpha,\delta,\omega)\Vdash \phi$, then $(\alpha,\delta,\omega)\Vdash \B_C\phi$.
\end{lemma}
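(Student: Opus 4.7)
The plan is to show that the very same action profile $s$ that witnesses $\B_C\psi$ also witnesses $\B_C\phi$, using the known implication $\K_C(\phi\to\psi)$ to transfer $\psi$'s absence across indistinguishable plays into $\phi$'s absence.

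First, I would unpack $(\alpha,\delta,\omega)\Vdash\B_C\psi$ via item 5 of Definition~\ref{sat}: this yields an action profile $s\in\Delta^C$ such that for every play $(\alpha',\delta',\omega')\in P$ with $\alpha\sim_C\alpha'$ and $s=_C\delta'$, we have $(\alpha',\delta',\omega')\nVdash\psi$. Next, I would take an arbitrary play $(\alpha',\delta',\omega')\in P$ satisfying $\alpha\sim_C\alpha'$ and $s=_C\delta'$ and argue that $(\alpha',\delta',\omega')\nVdash\phi$. To see this, suppose toward a contradiction that $(\alpha',\delta',\omega')\Vdash\phi$. By the assumption $(\alpha,\delta,\omega)\Vdash\K_C(\phi\to\psi)$ and item 4 of Definition~\ref{sat}, the fact $\alpha\sim_C\alpha'$ gives $(\alpha',\delta',\omega')\Vdash\phi\to\psi$, hence by item 3 we obtain $(\alpha',\delta',\omega')\Vdash\psi$, contradicting the choice of $s$.

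Thus $s$ is an action profile of $C$ such that for every $(\alpha',\delta',\omega')\in P$ with $\alpha\sim_C\alpha'$ and $s=_C\delta'$, we have $(\alpha',\delta',\omega')\nVdash\phi$. Combined with the assumption $(\alpha,\delta,\omega)\Vdash\phi$, item 5 of Definition~\ref{sat} yields $(\alpha,\delta,\omega)\Vdash\B_C\phi$, as required.

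There is no real obstacle here; the argument is a direct semantic unfolding. The only subtle point is conceptual: the same witnessing strategy $s$ is reused for both $\B_C\psi$ and $\B_C\phi$, which works precisely because the indistinguishability relation $\sim_C$ in item 5 of Definition~\ref{sat} is on initial states, so $\K_C(\phi\to\psi)$ propagates the implication to exactly the plays quantified over in the definition of $\B_C$.
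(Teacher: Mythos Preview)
Your proof is correct and follows essentially the same approach as the paper: unpack $\B_C\psi$ to obtain the witnessing action profile $s$, use $\K_C(\phi\to\psi)$ to transfer $\neg\psi$ to $\neg\phi$ on all $C$-indistinguishable plays compatible with $s$, and then combine with the hypothesis $(\alpha,\delta,\omega)\Vdash\phi$ to conclude $\B_C\phi$. The only cosmetic difference is that you phrase the transfer step as a proof by contradiction, whereas the paper argues it directly via contrapositive.
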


\begin{proof}
By Definition~\ref{sat}, assumption $(\alpha,\delta,\omega)\Vdash \K_C(\phi\to\psi)$ implies that for each play $(\alpha',\delta',\omega')\in P$ of the game if $\alpha\sim_C\alpha'$, then
$(\alpha',\delta',\omega')\Vdash\phi\to\psi$. 

By Definition~\ref{sat}, assumption $(\alpha,\delta,\omega)\Vdash \B_C\psi$ implies that there is an action profile $s\in \Delta^C$ such that for each play $(\alpha',\delta',\omega')\in P$, if $\alpha\sim_C\alpha'$ and $s=_C\delta'$, then $(\alpha',\delta',\omega')\nVdash\psi$. 

Hence, for each play $(\alpha',\delta',\omega')\in P$, if $\alpha\sim_C\alpha'$ and $s=_C\delta'$, then $(\alpha',\delta',\omega')\nVdash\phi$.
Therefore, $(\alpha,\delta,\omega)\Vdash \B_C\phi$ by Definition~\ref{sat} and the assumption $(\alpha,\delta,\omega)\Vdash \phi$ of the lemma.
\end{proof}

\begin{lemma}
If $(\alpha,\delta,\omega)\Vdash \B_C\phi$, then $(\alpha,\delta,\omega)\Vdash \K_C(\phi\to\B_C\phi)$.
\end{lemma}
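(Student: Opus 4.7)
The plan is to unfold the definitions and observe that the same action profile $s$ that witnesses $\B_C\phi$ at $(\alpha,\delta,\omega)$ also witnesses $\B_C\phi$ at every $C$-indistinguishable play where $\phi$ happens to be true. The key ingredient is the transitivity of $\sim_C$ (which follows from the fact that $\sim_a$ is an equivalence relation for every $a\in\mathcal{A}$, so $\sim_C=\bigcap_{a\in C}\sim_a$ is also an equivalence relation).

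Concretely, I would start by assuming $(\alpha,\delta,\omega)\Vdash\B_C\phi$ and extracting, via item~5 of Definition~\ref{sat}, an action profile $s\in\Delta^C$ such that $(\alpha'',\delta'',\omega'')\nVdash\phi$ for every play $(\alpha'',\delta'',\omega'')\in P$ with $\alpha\sim_C\alpha''$ and $s=_C\delta''$. To establish $(\alpha,\delta,\omega)\Vdash\K_C(\phi\to\B_C\phi)$, by item~4 of Definition~\ref{sat} I need to fix an arbitrary play $(\alpha',\delta',\omega')\in P$ with $\alpha\sim_C\alpha'$ and show $(\alpha',\delta',\omega')\Vdash\phi\to\B_C\phi$. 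So I assume further that $(\alpha',\delta',\omega')\Vdash\phi$ and aim at $(\alpha',\delta',\omega')\Vdash\B_C\phi$.

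I claim the same profile $s$ works as a witness at $(\alpha',\delta',\omega')$. Take any $(\alpha'',\delta'',\omega'')\in P$ with $\alpha'\sim_C\alpha''$ and $s=_C\delta''$. By transitivity of $\sim_C$ applied to $\alpha\sim_C\alpha'$ and $\alpha'\sim_C\alpha''$, we get $\alpha\sim_C\alpha''$, and then the defining property of $s$ gives $(\alpha'',\delta'',\omega'')\nVdash\phi$. This, together with the assumption $(\alpha',\delta',\omega')\Vdash\phi$, yields $(\alpha',\delta',\omega')\Vdash\B_C\phi$ via item~5 of Definition~\ref{sat}, completing the proof.

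There is no real obstacle here; the only subtle point is realizing that the witness profile transfers unchanged across the equivalence class of $\sim_C$, which is why the axiom is sound precisely under the S5 interpretation of $\K_C$ (transitivity is what makes the argument go through, and negative introspection/symmetry would be needed if one wanted to reverse the roles of $\alpha$ and $\alpha'$).
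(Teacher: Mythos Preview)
Your proof is correct and follows essentially the same route as the paper: extract the witnessing action profile $s$ from $(\alpha,\delta,\omega)\Vdash\B_C\phi$, then use transitivity of $\sim_C$ to show that the very same $s$ witnesses $\B_C\phi$ at any $C$-indistinguishable play where $\phi$ holds. The paper's argument is identical in structure and detail.
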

\begin{proof} 
By Definition~\ref{sat}, assumption $(\alpha,\delta,\omega)\Vdash \B_C\phi$ implies that there is an action profile $s\in \Delta^C$ such that for each play $(\alpha',\delta',\omega')\in P$, if $\alpha\sim_C\alpha'$ and $s=_C\delta'$, then $(\alpha',\delta',\omega')\nVdash\phi$. 

Let $(\alpha',\delta',\omega')\in P$ be a play where $\alpha\sim_C\alpha'$ and $(\alpha',\delta',\omega')\Vdash \phi$. By Definition~\ref{sat}, it suffices to show that $(\alpha',\delta',\omega')\Vdash \B_C\phi$. 

Consider any play $(\alpha'',\delta'',\omega'')\in P$ such that $\alpha'\sim_C\alpha''$ and $s=_C\delta''$.  
Then, since $\sim_C$ is an equivalence relation, assumptions $\alpha\sim_C\alpha'$ and $\alpha'\sim_C\alpha''$ imply $\alpha\sim_C\alpha''$. Thus, $(\alpha'',\delta'',\omega'')\nVdash\phi$ by the choice of action profile $s$. Therefore, $(\alpha',\delta',\omega')\Vdash \B_C\phi$ by Definition~\ref{sat} and the assumption $(\alpha',\delta',\omega')\Vdash \phi$.
\end{proof}

\section{Completeness}\label{completeness section}

In this section we prove the completeness of our logical system. The completeness theorem is stated in the end of this section as Theorem~\ref{completeness theorem}.

The standard completeness proof for epistemic logic of individual knowledge defines states as maximal consistent sets. Similarly, we defined outcomes of the game as maximal consistent sets in~\cite{nt19aaai}. In the case of the epistemic logic of distributed knowledge, two states are usually defined to be indistinguishable by an agent $a$ if these two states have the same $\K_a$ formulae. Unfortunately, this approach does not work for distributed knowledge. Indeed, two maximal consistent sets that have the same $\K_a$ and $\K_b$ formulae might have different $\K_{a,b}$ formulae. Such two states would be indistinguishable to agent $a$ and agent $b$, however, the distributed knowledge of agents $a$ and $b$ in these states will be different. This situation is inconsistent with Definition~\ref{sat}. To solve this problem we define outcomes not as maximal consistent sets of formulae, but as nodes of a tree. This approach has been previously used to prove the completeness of several logics for know-how modality~\cite{nt17aamas,nt17tark,nt18ai,nt18aaai,nt18aamas}.

We start the proof of
the completeness by defining the canonical game $G(X_0)=\left(I,\{\sim_a\}_{a\in\mathcal{A}},\Delta,\Omega,P,\pi\right)$ for each maximal consistent set of formulae $X_0$. In this definition, $\Phi$ refers to the set of all formulae in our language, see Definition~\ref{Phi}.

\begin{definition}\label{canonical outcome}
The set of outcomes $\Omega$ consists of all finite sequences $X_0,C_1,X_1$, $C_2,\dots,C_n,X_n$, such that
\begin{enumerate}
    \item $n\ge 0$,
    \item $X_i$ is a maximal consistent subset of $\Phi$ for each $i\ge 1$,
    \item $C_i$ is a coalition for each $i\ge 1$,
    \item $\{\phi\;|\;\K_{C_i}\phi\in X_{i-1}\}\subseteq X_i$ for each $i\ge 1$.
\end{enumerate}
\end{definition}

For any sequence $s=x_1,\dots,x_n$ and any element $y$, by $s::y$ we mean the sequence $x_1,\dots,x_n,y$. By $hd(s)$ we mean element $x_n$.
\begin{figure}[ht]
\begin{center}
\vspace{-1mm}
\scalebox{0.6}{\includegraphics{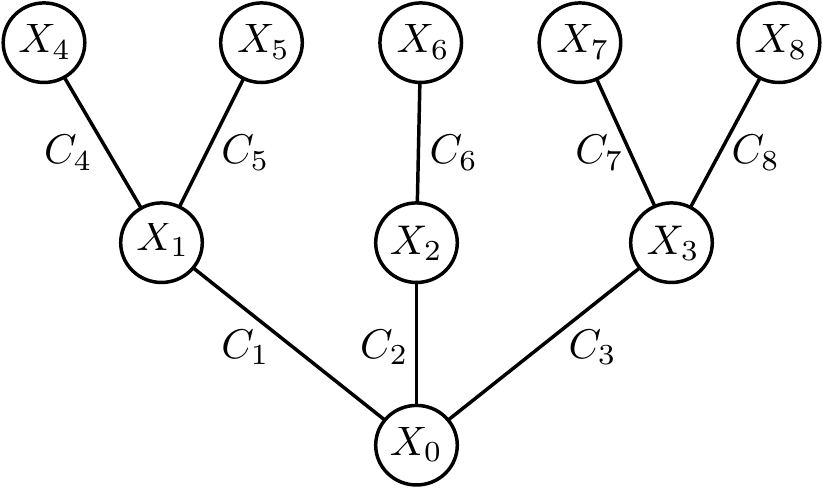}}
\caption{A fragment of tree.}\label{tree figure}
\vspace{-7mm}
\end{center}
\end{figure}
We define a tree structure on the set of outcomes $\Omega$ by saying that outcome (node) $\omega=X_0,C_1,X_1,C_2,\dots,C_n,X_n$ and outcome (node) $\omega::C_{n+1}::X_{n+1}$ are connected by an undirected edge labeled with all agents in coalition $C_{n+1}$, see Figure~\ref{tree figure}.

\begin{definition}\label{canonical sim}
For any outcomes $\omega,\omega'\in\Omega$ and any agent $a\in\mathcal{A}$, let $\omega\sim_a\omega'$ if all edges along the unique path between $\omega$ and $\omega'$ are labeled with agent $a$.
\end{definition}
\begin{lemma}\label{canonical sim is equivalence relation}
Relation $\sim_a$ is an equivalence relation on set $\Omega$.
\qed
\end{lemma}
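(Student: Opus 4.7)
The plan is to verify the three equivalence-relation properties using elementary tree-theoretic reasoning on $\Omega$. First I would observe that, by Definition~\ref{canonical outcome}, the set of outcomes forms a rooted tree: the root is the length-one sequence $X_0$, and each non-root node $X_0, C_1, X_1, \ldots, C_n, X_n$ has parent $X_0, C_1, X_1, \ldots, C_{n-1}, X_{n-1}$ via the edge labeled by the coalition $C_n$. Consequently, between any two outcomes there is exactly one undirected simple path, so the clause ``all edges along the unique path'' in Definition~\ref{canonical sim} is unambiguous.

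Reflexivity $\omega \sim_a \omega$ follows because the unique path from $\omega$ to itself is empty, so the universal quantification over its edges holds vacuously. Symmetry follows because the edges are undirected and the path from $\omega$ to $\omega'$ uses exactly the same edges as the path from $\omega'$ to $\omega$.

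The one substantive case is transitivity. Assuming $\omega \sim_a \omega'$ and $\omega' \sim_a \omega''$, I would argue that every edge on the unique path $P(\omega, \omega'')$ also appears in $P(\omega, \omega')$ or in $P(\omega', \omega'')$. The concatenation of these two given paths is a walk from $\omega$ to $\omega''$; in a tree, the unique simple path between two nodes is obtained from any walk between them by iteratively cancelling pairs of backtracking adjacent edges, and such cancellations only remove edges, never add them. Hence every edge of $P(\omega, \omega'')$ is labeled by a coalition containing $a$, giving $\omega \sim_a \omega''$. The main (and really only) obstacle is this transitivity step, and it is essentially a fact about trees rather than anything specific to the canonical construction: once the rooted-tree structure of $\Omega$ is in place, transitivity reduces to the standard ``paths in trees cancel to paths'' observation.
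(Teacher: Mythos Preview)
Your argument is correct; the paper itself omits the proof entirely (the lemma is stated with a bare $\qed$), so there is no approach to compare against. Your tree-based verification of reflexivity, symmetry, and transitivity is exactly the routine reasoning the paper leaves implicit.
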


Lemma~\ref{pre-transport lemma} below shows that the tree construction overcomes the distributed knowledge challenge discussed in the preamble for this section. Lemma~\ref{edge transport lemma} lays ground for the induction step in the proof of Lemma~\ref{pre-transport lemma}.

\begin{lemma}\label{edge transport lemma}
$\K_D\phi\in X_n$ iff $\K_D\phi\in X_{n+1}$ for any formula $\phi\in\Phi$, any $n\ge 0$, and any outcome $X_0,C_1,X_1,C_2,\dots,X_n,C_{n+1},X_{n+1}\in\Omega$, and any coalition $D\subseteq C_{n+1}$.
\end{lemma}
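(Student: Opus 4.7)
The plan is to prove each direction separately, using the definition of $\Omega$ (item 4 of Definition~\ref{canonical outcome}), the Monotonicity axiom for $\K$, and the Positive/Negative Introspection principles for the two directions respectively. Throughout, I exploit the hypothesis $D\subseteq C_{n+1}$ to push $\K_D$-formulae up through the $\K_{C_{n+1}}$-filter that connects $X_n$ to $X_{n+1}$.

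For the forward direction, suppose $\K_D\phi\in X_n$. By Positive Introspection (Lemma~\ref{positive introspection lemma}) and the maximality of $X_n$, we get $\K_D\K_D\phi\in X_n$. Since $D\subseteq C_{n+1}$, the Monotonicity axiom yields $\K_{C_{n+1}}\K_D\phi\in X_n$. Now item 4 of Definition~\ref{canonical outcome} gives $\K_D\phi\in X_{n+1}$, as desired.

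For the backward direction, I argue by contraposition: assume $\K_D\phi\notin X_n$ and derive $\K_D\phi\notin X_{n+1}$. Maximality of $X_n$ gives $\neg\K_D\phi\in X_n$. The Negative Introspection axiom, together with maximality and Modus Ponens, yields $\K_D\neg\K_D\phi\in X_n$. Using $D\subseteq C_{n+1}$ and the Monotonicity axiom, we obtain $\K_{C_{n+1}}\neg\K_D\phi\in X_n$. Then item 4 of Definition~\ref{canonical outcome} gives $\neg\K_D\phi\in X_{n+1}$, and consistency of $X_{n+1}$ forces $\K_D\phi\notin X_{n+1}$.

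I do not anticipate a serious obstacle here, since this is essentially the standard ``edges preserve higher-coalition knowledge'' lemma adapted to a tree-of-maximal-consistent-sets construction; the only mild subtlety is remembering that Monotonicity is applied in the direction $D\subseteq C_{n+1}$ (so $\K_D\psi\to\K_{C_{n+1}}\psi$), which works cleanly for both $\psi=\K_D\phi$ (forward) and $\psi=\neg\K_D\phi$ (backward). This lemma will then feed directly into Lemma~\ref{pre-transport lemma}, whose induction step requires exactly this single-edge transfer along a path whose edge labels include all of $D$.
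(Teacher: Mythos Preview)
Your proposal is correct and essentially identical to the paper's own proof: both directions use Positive Introspection (forward) and Negative Introspection (backward) to iterate the $\K_D$ prefix, then Monotonicity with $D\subseteq C_{n+1}$ to obtain a $\K_{C_{n+1}}$-formula in $X_n$, and finally item~4 of Definition~\ref{canonical outcome} together with maximality/consistency to transfer the conclusion to $X_{n+1}$.
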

\begin{proof}
If $\K_D\phi\in X_n$, then $X_n\vdash\K_D\K_D\phi$ by Lemma~\ref{positive introspection lemma}. Hence, $X_n\vdash\K_{C_{n+1}}\K_D\phi$ by the Monotonicity axiom and the assumption $D\subseteq {C_{n+1}}$. Thus, $\K_{C_{n+1}}\K_D\phi\in X_n$ by the maximality of set $X_n$. Therefore, $\K_D\phi\in X_{n+1}$ by Definition~\ref{canonical outcome}.

Suppose that $\K_D\phi\notin X_n$. Hence, $\neg\K_D\phi\in X_n$ by the maximality of set $X_n$. Thus, $X_n\vdash \K_D\neg\K_D\phi$ by the Negative Introspection axiom. Hence,  $X_n\vdash \K_{C_{n+1}}\neg\K_D\phi$ by the Monotonicity axiom and the assumption $D\subseteq C_{n+1}$. Then, $\K_{C_{n+1}}\neg\K_D\phi\in X_n$ by the maximality of set $X_n$. Thus, $\neg\K_D\phi\in X_{n+1}$ by Definition~\ref{canonical outcome}. Therefore, $\K_D\phi\notin X_{n+1}$ because set $X_{n+1}$ is consistent. 
\end{proof}

\begin{lemma}\label{pre-transport lemma}
If $\omega\sim_C \omega'$, then $\K_C\phi\in hd(\omega)$ iff $\K_C\phi\in hd(\omega')$.
\end{lemma}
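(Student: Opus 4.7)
The plan is to prove the lemma by induction on the length of the unique tree path connecting $\omega$ and $\omega'$. The key observation is that unpacking Definition~\ref{canonical sim} together with the convention that $\omega\sim_C\omega'$ means $\omega\sim_a\omega'$ for every $a\in C$ yields the following: each edge on the path from $\omega$ to $\omega'$ must be labeled with a coalition that includes every agent $a\in C$, and hence is labeled with some coalition $C_i$ such that $C\subseteq C_i$. This is exactly the hypothesis needed to invoke Lemma~\ref{edge transport lemma}.

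The base case is a path of length $0$, so $\omega=\omega'$ and the biconditional is trivial. For the inductive step, consider a path $\omega=\omega_0,\omega_1,\dots,\omega_{k+1}=\omega'$ of length $k+1$. The edge between $\omega_0$ and $\omega_1$ connects a parent node to a child node; in either direction, by Definition~\ref{canonical outcome} these two outcomes differ by a single extension $::C_j::X_j$ for some coalition $C_j$, and by the assumption $\omega\sim_C\omega'$ together with Definition~\ref{canonical sim}, we have $C\subseteq C_j$. Applying Lemma~\ref{edge transport lemma} with $D=C$ (in whichever direction matches parent-to-child) yields $\K_C\phi\in hd(\omega_0)$ iff $\K_C\phi\in hd(\omega_1)$. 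The induction hypothesis applied to the sub-path $\omega_1,\dots,\omega_{k+1}$ of length $k$, whose edges still all contain $C$ in their labels, gives $\K_C\phi\in hd(\omega_1)$ iff $\K_C\phi\in hd(\omega_{k+1})$. Chaining the two biconditionals concludes the inductive step.

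There is no real obstacle here: the tree construction was designed exactly so that distributed knowledge transports along edges whose label contains the relevant coalition, and Lemma~\ref{edge transport lemma} already does all of the axiomatic work (Positive Introspection, Negative Introspection, and Monotonicity). The only mild subtlety is that the path may go up through a common ancestor before descending to $\omega'$, so Lemma~\ref{edge transport lemma} is applied in both directions along the path; but since it is stated as a biconditional, this causes no issue.
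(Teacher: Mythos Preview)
Your proposal is correct and follows essentially the same approach as the paper: induction on the length of the unique tree path between $\omega$ and $\omega'$, with the inductive step handled by Lemma~\ref{edge transport lemma}. Your write-up is more detailed (in particular, you make explicit the observation about paths ascending through a common ancestor and the biconditional form of Lemma~\ref{edge transport lemma}), but the argument is the same.
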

\begin{proof}
If $\omega\sim_C \omega'$, then each edge along the unique path between nodes $\omega$ and $\omega'$ is labeled with all agents in coalition $C$. 

We prove the lemma by induction on the length of the unique path between nodes $\omega$ and $\omega'$. In the base case, $\omega=\omega'$. Thus, $\K_C\phi\in hd(\omega)$ iff $\K_C\phi\in hd(\omega')$. The induction step follows from Lemma~\ref{edge transport lemma}.
\end{proof}

\begin{lemma}\label{transport lemma}
If $\omega\sim_C \omega'$ and $\K_C\phi\in hd(\omega)$, then $\phi\in hd(\omega')$.
\end{lemma}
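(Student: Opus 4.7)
The plan is very short: this lemma is essentially an immediate corollary of the preceding Lemma \ref{pre-transport lemma} (pre-transport), with just one additional use of the Truth axiom. All the genuine work—propagating the $\K_C$-formulae along a path of $C$-labeled edges—has already been carried out in Lemma \ref{edge transport lemma} and Lemma \ref{pre-transport lemma}.

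First, I would apply Lemma \ref{pre-transport lemma} directly to the assumptions $\omega\sim_C\omega'$ and $\K_C\phi\in hd(\omega)$, to conclude $\K_C\phi\in hd(\omega')$. This is the only step that depends on the tree structure and on the fact that every edge along the unique $\omega$-to-$\omega'$ path is labeled by a coalition containing $C$; the proof has already reduced this to a repeated application of the edge-transport lemma.

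Next, the Truth axiom gives $\vdash\K_C\phi\to\phi$, so by Modus Ponens $hd(\omega')\vdash\phi$. Since $hd(\omega')$ is, by item 2 of Definition \ref{canonical outcome}, a maximal consistent subset of $\Phi$, it follows that $\phi\in hd(\omega')$, which is what we wanted.

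There is really no obstacle to overcome at this stage: the entire difficulty of handling distributed knowledge in the tree-based canonical model was absorbed into Lemma \ref{edge transport lemma}, whose two directions relied on the Positive Introspection lemma together with the Monotonicity axiom, and on the Negative Introspection axiom together with the Monotonicity axiom, respectively. Once that machinery is in place, the transport lemma itself is just ``propagate $\K_C\phi$ along the path, then strip the $\K_C$''.
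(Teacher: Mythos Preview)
Your proposal is correct and matches the paper's proof essentially line for line: apply Lemma~\ref{pre-transport lemma} to obtain $\K_C\phi\in hd(\omega')$, then use the Truth axiom and Modus Ponens to get $hd(\omega')\vdash\phi$, and conclude $\phi\in hd(\omega')$ by maximality. Your surrounding commentary on where the real work was done is accurate and helpful context.
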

\begin{proof}
By Lemma~\ref{pre-transport lemma}, assumptions $\omega\sim_C \omega'$ and $\K_C\phi\in hd(\omega)$ imply that $\K_C\phi\in hd(\omega')$. Thus, $hd(\omega')\vdash\phi$ by the Truth axiom and the Modus Ponens inference rule. Therefore, $\phi\in hd(\omega')$ because set $hd(\omega')$ is maximal.
\end{proof}

The set of the initial states $I$ of the canonical game is the set of all equivalence classes of $\Omega$ with respect to relation $\sim_\mathcal{A}$.

\begin{definition}
$I=\Omega/\!\sim_\mathcal{A}$.
\end{definition}

\begin{lemma}\label{well-defined lemma}
Relation $\sim_C$ is well-defined on set $I$.
\end{lemma}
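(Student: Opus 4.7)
The plan is to reduce the well-definedness of $\sim_C$ on $I=\Omega/\!\sim_\mathcal{A}$ to two simple observations: $\sim_C$ is an equivalence relation on $\Omega$, and the relation $\sim_\mathcal{A}$ refines $\sim_C$ for every coalition $C\subseteq\mathcal{A}$. With these two facts, standard reasoning about quotients by a refinement yields the statement.

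First I would recall from the Syntax and Semantics section that $\omega\sim_C\omega'$ is defined to mean $\omega\sim_a\omega'$ for every $a\in C$. Hence $\sim_C=\bigcap_{a\in C}\sim_a$, and since each $\sim_a$ is an equivalence relation on $\Omega$ by Lemma~\ref{canonical sim is equivalence relation}, so is $\sim_C$ as an intersection of equivalence relations. In particular, for $C=\mathcal{A}$, we get that $\sim_\mathcal{A}$ is an equivalence relation, so $I=\Omega/\!\sim_\mathcal{A}$ makes sense. Because $C\subseteq\mathcal{A}$, we have $\sim_\mathcal{A}\,=\bigcap_{a\in\mathcal{A}}\sim_a\,\subseteq\,\bigcap_{a\in C}\sim_a\,=\sim_C$; that is, any two outcomes indistinguishable to the grand coalition are indistinguishable to every subcoalition.

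Next I would verify independence of representatives directly. Pick representatives $\omega_1,\omega_1'\in\Omega$ of one equivalence class and $\omega_2,\omega_2'\in\Omega$ of another, so that $\omega_1\sim_\mathcal{A}\omega_1'$ and $\omega_2\sim_\mathcal{A}\omega_2'$. By the refinement observation, $\omega_1\sim_C\omega_1'$ and $\omega_2\sim_C\omega_2'$. Assuming $\omega_1\sim_C\omega_2$, symmetry and transitivity of $\sim_C$ (valid by the first step) give $\omega_1'\sim_C\omega_1\sim_C\omega_2\sim_C\omega_2'$, hence $\omega_1'\sim_C\omega_2'$; the converse implication is symmetric. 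Therefore the definition $[\omega_1]_\mathcal{A}\sim_C[\omega_2]_\mathcal{A}$ iff $\omega_1\sim_C\omega_2$ does not depend on the chosen representatives.

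I do not expect any genuine obstacle here: the entire argument is the standard fact that any equivalence relation descends to a well-defined relation on the quotient by a finer equivalence relation. The only thing worth writing out carefully is the chain $\omega_1'\sim_C\omega_1\sim_C\omega_2\sim_C\omega_2'$, which is what the proof boils down to.
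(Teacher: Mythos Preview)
Your argument is correct and follows the same logical skeleton as the paper's proof: from $\omega_1\sim_\mathcal{A}\omega_1'$ and $\omega_2\sim_\mathcal{A}\omega_2'$ one gets $\omega_1'\sim_C\omega_1$ and $\omega_2\sim_C\omega_2'$, and then transitivity of $\sim_C$ yields $\omega_1'\sim_C\omega_2'$. The only difference is in presentation. The paper unpacks everything down to Definition~\ref{canonical sim}, arguing directly about edge labels along tree paths (the path $\omega_1'\!\to\!\omega_1$ has all edges labeled by $\mathcal{A}$, hence by $C$; likewise for $\omega_2\!\to\!\omega_2'$; so the concatenated walk, and therefore the unique simple path, from $\omega_1'$ to $\omega_2'$ has all edges labeled by $C$). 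You instead work one level up, invoking Lemma~\ref{canonical sim is equivalence relation} and the containment $\sim_\mathcal{A}\subseteq\sim_C$ to get symmetry, transitivity, and refinement abstractly. Your framing is cleaner and sidesteps the implicit tree fact that the simple path is contained in any walk, but the two proofs are doing the same thing.
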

\begin{proof}
Suppose that $\omega_1\sim_C\omega_2$. Consider any outcomes $\omega'_1$ and $\omega'_2$ such that $\omega_1\sim_\mathcal{A}\omega'_1$ and $\omega_2\sim_\mathcal{A}\omega'_2$. It suffices to prove that $\omega'_1\sim_C\omega'_2$.

By Definition~\ref{canonical sim} and Lemma~\ref{canonical sim is equivalence relation}, assumption $\omega_1\sim_\mathcal{A}\omega'_1$ implies that each edges along the unique path between nodes $\omega'_1$ and $\omega_1$ is labeled with all agents in set $\mathcal{A}$. Also,  assumption $\omega_1\sim_C\omega_2$ implies that each edge along the unique path between nodes $\omega_1$ and $\omega_2$ is labeled with all agents in coalition $C$. Finally, assumption $\omega_2\sim_\mathcal{A}\omega'_2$ implies that each edges along the unique path between nodes $\omega_2$ and $\omega'_2$ is labeled with all agents in set $\mathcal{A}$. Hence, each edge along the unique path between nodes $\omega'_1$ and $\omega'_2$ is labeled with all agents in coalition $C$. Therefore, $\omega'_1\sim_C\omega'_2$ by Definition~\ref{canonical sim}.
\end{proof}

\begin{lemma}\label{alpha iff omega}
$\alpha\sim_C\alpha'$ iff $\omega\sim_C\omega'$, for any initial states $\alpha,\alpha'\in I$, any outcomes $\omega\in\alpha$ and $\omega'\in\alpha'$, and any coalition $C\subseteq\mathcal{A}$. \qed
\end{lemma}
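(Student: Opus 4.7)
The claim has two directions, and both reduce to a simple fact: since $C\subseteq\mathcal{A}$, every edge labelled with all agents of $\mathcal{A}$ is, in particular, labelled with all agents of $C$, so $\sim_\mathcal{A}$ refines $\sim_C$ on $\Omega$ (see Definition~\ref{canonical sim}). The plan is to exploit this, together with the fact that $\sim_C$ is an equivalence relation (Lemma~\ref{canonical sim is equivalence relation}).

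For the left-to-right direction, suppose $\alpha\sim_C\alpha'$. The relation $\sim_C$ on $I$ is defined in terms of representatives, so there exist outcomes $\omega_0\in\alpha$ and $\omega'_0\in\alpha'$ with $\omega_0\sim_C\omega'_0$; this is essentially the content invoked at the start of the proof of Lemma~\ref{well-defined lemma}. Now let $\omega\in\alpha$ and $\omega'\in\alpha'$ be arbitrary. Since $\omega,\omega_0$ both lie in $\alpha$, we have $\omega\sim_\mathcal{A}\omega_0$, and since $C\subseteq\mathcal{A}$, this gives $\omega\sim_C\omega_0$; symmetrically $\omega'_0\sim_C\omega'$. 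Chaining these three instances of $\sim_C$ by transitivity (Lemma~\ref{canonical sim is equivalence relation}) yields $\omega\sim_C\omega'$, as required.

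For the right-to-left direction, suppose $\omega\sim_C\omega'$ for some $\omega\in\alpha$ and $\omega'\in\alpha'$. Then $\omega$ and $\omega'$ are witnesses to $\alpha$ and $\alpha'$ being $\sim_C$-related representatives, so $\alpha\sim_C\alpha'$ by the very definition of $\sim_C$ on $I$ (i.e.\ on the quotient $\Omega/\!\sim_\mathcal{A}$).

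I do not expect any genuine obstacle here; the lemma is essentially a restatement of Lemma~\ref{well-defined lemma} together with the observation that $\sim_\mathcal{A}$ refines $\sim_C$, packaged so that the later completeness proof can freely switch between reasoning about initial states and reasoning about outcome representatives.
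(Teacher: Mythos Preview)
Your proposal is correct and matches the paper's approach: the paper gives no proof at all (the lemma is marked with \qed), treating it as an immediate consequence of Lemma~\ref{well-defined lemma} and the definition of $\sim_C$ on the quotient $I=\Omega/\!\sim_\mathcal{A}$. Your argument is precisely the spelled-out version of this observation, using that $\sim_\mathcal{A}$ refines $\sim_C$ together with transitivity.
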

If $C=\mathcal{A}$, then the above lemma implies that elements of class $\alpha$ are $\sim_{\mathcal{A}}$ equivalent to elements of class $\alpha'$. Thus, we have the following corollary.

\begin{corollary}\label{alpha corollary}
For any initial states $\alpha, \alpha'\in I$, if $\alpha\sim_\mathcal{A}\alpha'$, then $\alpha=\alpha'$. 
\qed
\end{corollary}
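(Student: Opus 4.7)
The plan is to unwind the definition of $I$ as the quotient $\Omega/\!\sim_\mathcal{A}$ and apply the preceding Lemma~\ref{alpha iff omega} with $C=\mathcal{A}$. Concretely, since initial states are equivalence classes, both $\alpha$ and $\alpha'$ are nonempty, so I would pick representatives $\omega\in\alpha$ and $\omega'\in\alpha'$.

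Next, I would invoke Lemma~\ref{alpha iff omega} in the direction ``$\alpha\sim_C\alpha'$ implies $\omega\sim_C\omega'$'' with the coalition $C$ instantiated to $\mathcal{A}$. The hypothesis $\alpha\sim_\mathcal{A}\alpha'$ then yields $\omega\sim_\mathcal{A}\omega'$, which by the very definition of $I$ places $\omega$ and $\omega'$ into the same $\sim_\mathcal{A}$-equivalence class. Since $\omega$ witnesses membership in $\alpha$ and $\omega'$ witnesses membership in $\alpha'$, we conclude $\alpha=\alpha'$.

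There is essentially no obstacle here: the corollary is a direct reading of the previous lemma at $C=\mathcal{A}$ combined with the fact that equivalence classes are either equal or disjoint. The only mild subtlety is making sure that picking representatives is legitimate, which is immediate because equivalence classes under $\sim_\mathcal{A}$ are nonempty by construction, and that the well-definedness of $\sim_C$ on $I$ (Lemma~\ref{well-defined lemma}) is what makes the statement $\alpha\sim_\mathcal{A}\alpha'$ meaningful in the first place. The proof should be a single short paragraph.
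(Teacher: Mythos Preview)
Your proposal is correct and matches the paper's approach exactly: the corollary is stated immediately after Lemma~\ref{alpha iff omega} with only a one-line justification (instantiating $C=\mathcal{A}$ so that representatives of $\alpha$ and $\alpha'$ become $\sim_\mathcal{A}$-equivalent, forcing the equivalence classes to coincide), and the paper marks it with \qed\ rather than giving a separate proof.
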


Intuitively, in the canonical game, the agents ``veto'' formulae. The domain of choices of the game consists of all formulae in the set $\Phi$. To veto a formula $\psi$, an agent must choose action $\psi$.  The mechanism guarantees that if $\cK_C\B_C\psi\in hd(\omega)$ and all agents in the coalition $C$ veto formula $\psi$, then $\neg\psi\in hd(\omega)$.

\begin{definition}
The domain of actions $\Delta$ is set $\Phi$.
\end{definition}

\begin{definition}\label{canonical play}
The set $P\subseteq I\times \Delta^\mathcal{A}\times \Omega$ consists of all triples $(\alpha,\delta,\omega)$ such that $\omega\in\alpha$ and for  any formula $\cK_C\B_C\psi\in hd(\omega)$, if $\delta(a)=\psi$ for each agent $a\in C$, then $\neg\psi\in hd(\omega)$.
\end{definition}

\begin{definition}\label{canonical pi}
$\pi(p)=\{(\alpha,\delta,\omega)\in P\;|\; p\in hd(\omega)\}$.
\end{definition}

This concludes the definition of the canonical game $G(X_0)$. In Lemma~\ref{canonical nontermination lemma} we will show the condition from item 5 of Definition~\ref{game definition}. Namely, that for each initial state $\alpha\in I$ and each complete action profile $\delta\in\Delta^\mathcal{A}$ there is at least one outcome $\omega\in \Omega$ such that $(\alpha,\delta,\omega)\in P$.  

We state and prove the completeness later in this section as Theorem~\ref{completeness theorem}. We start with auxiliary results that will be used in the proof of the completeness.

\begin{lemma}\label{B child exists lemma}
For any play $(\alpha,\delta,\omega)\in P$ of  game $G(X_0)$, any action profile $s\in\Delta^C$, and any formula $\neg(\phi\to \B_C\phi)\in hd(\omega)$, there is a play $(\alpha',\delta',\omega')\in P$ such that $\alpha\sim_C\alpha'$, $s =_C\delta'$, and $\phi\in hd(\omega')$.
\end{lemma}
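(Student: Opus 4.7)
My plan is to realize $\omega'$ as the tree child $\omega::C::X'$ for a suitably chosen maximal consistent set $X'$, let $\delta'$ agree with $s$ on $C$ and equal $\top$ on $\mathcal{A}\setminus C$, and put $\alpha'=[\omega']_{\sim_{\mathcal{A}}}$. The edge labeled $C$ between $\omega$ and $\omega'$ immediately gives $\omega\sim_C\omega'$ by Definition~\ref{canonical sim}, hence $\alpha\sim_C\alpha'$ by Lemma~\ref{alpha iff omega}, while $s=_C\delta'$ is built into the definition of $\delta'$. The real work is choosing $X'$ so that $\phi\in X'$ and the condition of Definition~\ref{canonical play} is met.

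For each formula $\psi$, let $D_\psi=\{a\in C\mid s(a)=\psi\}$; the $D_\psi$ with $D_\psi\neq\varnothing$ are pairwise disjoint subsets of $C$. I would take $X'$ to be a maximal consistent extension (by Lemma~\ref{Lindenbaum's lemma}) of
$$S=\{\chi\in\Phi\mid\K_C\chi\in hd(\omega)\}\cup\{\phi\}\cup\{\psi\to\K_{D_\psi}\neg\B_{D_\psi}\psi\mid D_\psi\neq\varnothing\}.$$
Because $\{\chi\mid\K_C\chi\in hd(\omega)\}\subseteq X'$, Definition~\ref{canonical outcome} puts $\omega'=\omega::C::X'$ in $\Omega$. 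To verify the play condition, suppose $\cK_D\B_D\psi\in X'$ and $\delta'(a)=\psi$ for every $a\in D$. The case $D=\varnothing$ is ruled out because $\vdash\neg\B_\varnothing\psi$ makes $\cK_\varnothing\B_\varnothing\psi$ inconsistent. If $D\not\subseteq C$, then some agent in $D$ is assigned $\top$ by $\delta'$, forcing $\psi=\top$; but $\vdash\neg\B_D\top$ (Blamelessness of Truth) and Necessitation give $\vdash\neg\cK_D\B_D\top$, contradicting $\cK_D\B_D\psi\in X'$. If $D\subseteq C$, then $D\subseteq D_\psi$, and assuming $\psi\in X'$ for contradiction, the axiom $\psi\to\K_{D_\psi}\neg\B_{D_\psi}\psi\in S$ forces $\K_{D_\psi}\neg\B_{D_\psi}\psi\in X'$, whence $\neg\B_{D_\psi}\psi\in X'$ by Truth and $\neg\B_D\psi\in X'$ by Monotonicity; yet Lemma~\ref{alt fairness lemma} applied to $\cK_D\B_D\psi$ and $\psi$ gives $\B_D\psi\in X'$, a contradiction, so $\neg\psi\in X'$ as required.

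The main obstacle is showing $S$ itself consistent. Assume otherwise: some finite fragment
$$\chi_1,\dots,\chi_k,\phi,\;\psi_1\to\K_{D_{\psi_1}}\neg\B_{D_{\psi_1}}\psi_1,\;\dots,\;\psi_m\to\K_{D_{\psi_m}}\neg\B_{D_{\psi_m}}\psi_m$$
is inconsistent, with $\K_C\chi_i\in hd(\omega)$ and $\psi_1,\dots,\psi_m$ pairwise distinct actions used by $s$ on $C$, so the $D_{\psi_j}$ are pairwise disjoint nonempty subsets of $C$. Propositional reasoning and the Deduction lemma give
$$\chi_1,\dots,\chi_k\vdash\phi\to\bigvee_{j=1}^{m}\bigl(\psi_j\wedge\cK_{D_{\psi_j}}\B_{D_{\psi_j}}\psi_j\bigr),$$
which Lemma~\ref{super distributivity} promotes to $\K_C\bigl(\phi\to\bigvee_{j=1}^{m}(\psi_j\wedge\cK_{D_{\psi_j}}\B_{D_{\psi_j}}\psi_j)\bigr)\in hd(\omega)$.

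Set $J=\{j\mid\cK_{D_{\psi_j}}\B_{D_{\psi_j}}\psi_j\in hd(\omega)\}$. For $j\notin J$, maximality gives $\K_{D_{\psi_j}}\neg\B_{D_{\psi_j}}\psi_j\in hd(\omega)$; Lemma~\ref{positive introspection lemma} plus the Monotonicity axiom push this to $\K_C\K_{D_{\psi_j}}\neg\B_{D_{\psi_j}}\psi_j\in hd(\omega)$, and a further Distributivity step produces $\K_C\neg(\psi_j\wedge\cK_{D_{\psi_j}}\B_{D_{\psi_j}}\psi_j)\in hd(\omega)$. Using these to discard the dead disjuncts, and weakening each surviving $\psi_j\wedge\cK_{D_{\psi_j}}\B_{D_{\psi_j}}\psi_j$ to $\psi_j$ (all under $\K_C$ via Lemma~\ref{super distributivity}), I obtain $\K_C(\phi\to\bigvee_{j\in J}\psi_j)\in hd(\omega)$. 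Since $\{D_{\psi_j}\}_{j\in J}$ is a pairwise disjoint family of subsets of $C$ and each $\cK_{D_{\psi_j}}\B_{D_{\psi_j}}\psi_j\in hd(\omega)$, Lemma~\ref{five plus plus} delivers $\K_C(\phi\to\B_C\phi)\in hd(\omega)$. But the hypothesis $\neg(\phi\to\B_C\phi)\in hd(\omega)$ together with the contrapositive of the Truth axiom gives $\neg\K_C(\phi\to\B_C\phi)\in hd(\omega)$, the desired contradiction.
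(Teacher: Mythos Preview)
Your proof is correct and follows the same overall architecture as the paper: build $\omega'=\omega::C::X'$ for a suitably chosen maximal consistent $X'$, define $\delta'$ to agree with $s$ on $C$ and be constant outside, and reduce the consistency argument to Lemma~\ref{five plus plus}. The differences are in the bookkeeping. The paper takes the constant action outside $C$ to be $\bot$ (so that in the $D\nsubseteq C$ case $\neg\chi=\neg\bot$ is trivially in $X'$), whereas you take it to be $\top$ and instead argue that $\cK_D\B_D\top$ is inconsistent via the Blamelessness of Truth axiom; both work. More substantively, the paper's third block in the set is simply $\{\neg\chi\mid \cK_D\B_D\chi\in hd(\omega),\,D\subseteq C,\,\forall a\in D\,s(a)=\chi\}$, i.e.\ it only records negations for those $\chi$ whose diamond-blame already lives in $hd(\omega)$. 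This makes the consistency proof shorter (no case split on $J$ is needed, since every $\chi_i$ automatically satisfies $\cK_{D_i}\B_{D_i}\chi_i\in hd(\omega)$), at the price of a slightly longer verification of the play condition (one must separately rule out $\cK_D\B_D\chi\in hd(\omega')\setminus hd(\omega)$ via positive introspection and Monotonicity). Your choice of putting the conditionals $\psi\to\K_{D_\psi}\neg\B_{D_\psi}\psi$ into $S$ shifts that work into the consistency proof, where you recover exactly the same split via your set $J$. Either packaging is fine; the paper's is a bit more economical.
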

\begin{proof}
Consider the following set of formulae:
\begin{eqnarray*}
X&\!\!=\!\!&\!\{\phi\}\;\cup\;\{\psi\;|\;\K_C\psi\in hd(\omega)\}\\
&&\!\cup\;\{\neg\chi\;|\;\cK_D\B_D\chi\in hd(\omega), D\subseteq C,\forall a\in D(s(a)=\chi)\}.
\end{eqnarray*}
\begin{claim}
Set $X$ is consistent.
\end{claim}
\begin{proof-of-claim}
Suppose the opposite. Thus, there are 
\begin{eqnarray}
\mbox{ formulae }&&\K_C\psi_1,\dots,\K_C\psi_m\in hd(\omega),\label{choice of psi-s}\\
\mbox{and formulae }&&\cK_D\B_{D_1}\chi_1,\dots,\cK_D\B_{D_n}\chi_n\in hd(\omega),\label{choice of chi-s}\\
\mbox{such that }&&D_1,\dots,D_n\subseteq C,\label{choice of Ds}\\
&&s(a)=\chi_i\mbox{ for all } i\le n\mbox{ and all }a\in D_i,\label{choice of votes}\\
\mbox{ and }&&\psi_1,\dots,\psi_m,\neg\chi_1,\dots,\neg\chi_n\vdash\neg\phi.\label{choice of cons}
\end{eqnarray}
Without loss of generality, we assume that formulae $\chi_1,\dots,\chi_n$ are distinct. Thus, assumption~(\ref{choice of votes}) implies that sets $D_1,\dots,D_n$ are pairwise disjoint. 
By propositional reasoning, assumption~(\ref{choice of cons}) implies 
$$
\psi_1,\dots,\psi_m\vdash\phi\to\chi_1\vee\dots\vee\chi_n.
$$
Thus, by Lemma~\ref{super distributivity},
$$
\K_C\psi_1,\dots,\K_C\psi_m\vdash\K_C(\phi\to\chi_1\vee\dots\vee\chi_n).
$$
Hence, 
$
hd(\omega)\vdash\K_C(\phi\to\chi_1\vee\dots\vee\chi_n)
$ 
by assumption~(\ref{choice of psi-s}).
Thus, $
hd(\omega)\vdash \K_C(\phi\to\B_C\phi)
$ by  Lemma~\ref{five plus plus}, assumption~(\ref{choice of chi-s}), and the assumption that sets $D_1,\dots,D_n$ are pairwise disjoint.
Hence, by the Truth axiom, $hd(\omega)\vdash \phi\to\B_C\phi$, which contradicts the assumption $\neg(\phi\to\B_C\phi)\in hd(\omega)$ of the lemma because set $hd(\omega)$ is consistent.
Therefore, set $X$ is consistent.
\end{proof-of-claim}

By Lemma~\ref{Lindenbaum's lemma}, there is a maximal consistent extension $X'$ of set $X$. Let $\omega'$ be the sequence $\omega::C::X'$. Note that $\omega'\in\Omega$ by Definition~\ref{canonical outcome} and the choice of sets $X$ and $X'$. Also $\phi\in X\subseteq hd(\omega')$ by the choice of sets $X$ and $X'$.

Let initial state $\alpha'$ be the equivalence class of outcome $\omega'$ with respect to the equivalence relation $\sim_{\mathcal{A}}$. Note that $\omega\sim_C\omega'$ by Definition~\ref{canonical outcome} and the choice of sequence $\omega'$. Therefore, $\alpha\sim_C\alpha'$ by Lemma~\ref{alpha iff omega}.

Let the complete action profile $\delta'$ be defined as follows:
\begin{equation}\label{choice of delta'}
    \delta'(a)=
    \begin{cases}
    s(a), & \mbox{ if } a\in C,\\
    \bot, & \mbox{ otherwise}.
    \end{cases}
\end{equation}
Then, $s=_C\delta'$.
\begin{claim}
$(\alpha',\delta',\omega')\in P$.
\end{claim}
\begin{proof-of-claim}
First, note that $\omega'\in\alpha'$ because state $\alpha'$ is the equivalence class of outcome $\omega'$. Next, consider any formula $\cK_D\B_D\chi\in hd(\omega')$ such that $\delta'(a)=\chi$ for each $a\in D$. By Definition~\ref{canonical play}, it suffices to show that $\neg\chi\in hd(\omega')$. 

\noindent{\bf Case I:} $D\subseteq C$. Thus, $s(a)=\chi$ for each $a\in D$ by equation~(\ref{choice of delta'}) and the assumption that $\delta'(a)=\chi$ for each $a\in D$.

Suppose that $\neg\chi\notin hd(\omega')$. Then, $\neg\chi\notin X$ because $X\subseteq X'=hd(\omega')$ by the choice of  $X'$ and  $\omega'$. Thus, $\cK_D\B_D\chi\notin hd(\omega)$ by the definition of set $X$ and because  $s(a)=\chi$ for each $a\in D$. Hence, $\K_D\neg\B_D\chi\in hd(\omega)$ by the definition of modality $\cK$ and the maximality of the set $hd(\omega)$. Thus, $hd(\omega)\vdash \K_D\K_D\neg\B_D\chi$ by Lemma~\ref{positive introspection lemma}.  Then, $hd(\omega)\vdash \K_C\K_D\neg\B_D\chi$ by the Monotonicity axiom and because $D\subseteq C$. Thus, $\K_C\K_D\neg\B_D\chi\in hd(\omega)$ by the maximality of the set $hd(\omega)$. Hence,  $\K_D\neg\B_D\chi\in X$ by the choice of set $X$. Thus,  $\K_D\neg\B_D\chi\in X'=hd(\omega')$ by the choice of set $X'$ and the choice of sequence $\omega'$. Then, $\neg\K_D\neg\B_D\chi\notin hd(\omega')$ because set $hd(\omega')$ is consistent. Therefore, $\cK_D\B_D\chi\notin hd(\omega')$  by the definition of modality $\cK$, which contradicts the choice of formula $\cK_D\B_D\chi$.

\noindent{\bf Case II:} $D\nsubseteq C$. Consider any $d_0\in D\setminus C$. Thus, $\delta'(d_0)=\bot$ by equation~(\ref{choice of delta'}). Also, $\delta'(d_0)=\chi$ because $d_0\in D$. Thus, $\chi\equiv \bot$. Hence, formula $\neg\chi$ is a tautology. Therefore, $\neg\chi\in hd(\omega')$ by the maximality of set $hd(\omega')$. 
\end{proof-of-claim}

\noindent This concludes the proof of the lemma.
\end{proof}

\begin{lemma}\label{delta exists lemma}
For any outcome $\omega\in\Omega$, there is an initial state $\alpha\in I$ and a complete action profile $\delta\in \Delta^\mathcal{A}$ such that $(\alpha,\delta,\omega)\in P$.
\end{lemma}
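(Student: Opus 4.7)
The plan is to exhibit explicit choices for $\alpha$ and $\delta$. First I would take $\alpha$ to be the $\sim_\mathcal{A}$-equivalence class $[\omega]_{\sim_\mathcal{A}}$, so that $\omega \in \alpha$ trivially and $\alpha \in I = \Omega/\!\sim_\mathcal{A}$ by definition. Then I would define the complete action profile $\delta \in \Delta^\mathcal{A}$ to be the constant profile $\delta(a) = \top$ for every $a \in \mathcal{A}$; this is a legitimate choice because $\Delta = \Phi$ and $\top$ is shorthand for $\neg\bot$, hence a formula.

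With these choices, by Definition~\ref{canonical play}, verifying $(\alpha,\delta,\omega) \in P$ reduces to showing that for every coalition $C$ and every formula $\psi$, if $\cK_C \B_C \psi \in hd(\omega)$ and $\delta(a) = \psi$ for every $a \in C$, then $\neg\psi \in hd(\omega)$. My plan is to prove the stronger statement that the antecedent is never satisfied, by splitting on whether $C$ is empty. If $C = \varnothing$, the None to Blame axiom gives $\vdash \neg \B_\varnothing \psi$; applying the Necessitation rule yields $\vdash \K_\varnothing \neg \B_\varnothing \psi$, and since $hd(\omega)$ is consistent it cannot contain $\cK_\varnothing \B_\varnothing \psi = \neg \K_\varnothing \neg \B_\varnothing \psi$. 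If $C \neq \varnothing$, the requirement $\delta(a) = \psi$ for some (in fact any) $a \in C$ forces $\psi$ to be the particular formula $\top$; but then the Blamelessness of Truth axiom $\vdash \neg \B_C \top$, combined with Necessitation, similarly rules $\cK_C \B_C \top$ out of $hd(\omega)$.

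The main obstacle is really just bookkeeping rather than substance: one must confirm that $\top$ is indeed in $\Delta$, that the non-empty-coalition case genuinely pins $\psi$ down to $\top$, and that the vacuity argument covers the empty-coalition case as well (where $\psi$ is unconstrained but the antecedent $\cK_\varnothing \B_\varnothing \psi \in hd(\omega)$ is still impossible). Once both vacuity arguments are in place, Definition~\ref{canonical play} is satisfied trivially and the lemma is immediate. Unlike Lemma~\ref{B child exists lemma}, no new tree node needs to be constructed and no large consistent set needs to be assembled; the entire content of the lemma is that the two "blamelessness" axioms make the constant profile $\delta \equiv \top$ compatible with every outcome.
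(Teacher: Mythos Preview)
Your proposal is correct and follows essentially the same strategy as the paper: take $\alpha = [\omega]_{\sim_\mathcal{A}}$, choose a constant action profile, and split on whether the coalition is empty. The only difference is cosmetic: the paper uses the constant profile $\delta(a)=\bot$ rather than $\top$. With $\bot$, the non-empty case is handled by observing that $\psi$ must equal $\bot$, so $\neg\psi$ is a tautology and lies in $hd(\omega)$ directly---no appeal to the Blamelessness of Truth axiom is needed. Your choice of $\top$ works just as well but trades that one-line tautology argument for an invocation of the Blamelessness of Truth axiom plus Necessitation; either route discharges the verification in Definition~\ref{canonical play}.
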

\begin{proof}
Let the initial state $\alpha$ be the equivalence class of outcome $\omega$ with respect to the equivalence relation $\sim_{\mathcal{A}}$. Thus, $\omega\in\alpha$. Let $\delta$ be the complete action profile such that $\delta(a)=\bot$ for each $a\in \mathcal{A}$. To prove $(\alpha,\delta,\omega)\in P$, consider any formula $\cK_D\B_D\chi\in hd(\omega)$ 
such that $\delta(a)=\chi$ for each $a\in D$. By Definition~\ref{canonical play}, it suffices to show that $\neg\chi\in hd(\omega)$. 

\noindent{\bf Case I}: $D=\varnothing$. Thus, $\vdash\neg\B_D\chi$ by the None to Blame axiom. Hence, $\vdash\K_D\neg\B_D\chi$ by the Necessitation rule. Then, $\neg\K_D\neg\B_D\chi\notin hd(\omega)$ because set $hd(\omega)$ is consistent. Therefore, $\cK_D\B_D\chi\notin hd(\omega)$ by the definition of modality $\cK$, which contradicts the choice of formula $\cK_D\B_D\chi$. 

\noindent{\bf Case II}: $D\neq\varnothing$. Then, there is at least one agent $d_0\in D$. Hence, $\chi=\delta(d_0)=\bot$ by the definition of the complete action profile $\delta$. Then, $\neg\chi$ is a tautology. Thus, $\neg\chi\in hd(\omega)$ by the maximality of set $hd(\omega)$.
\end{proof}

Next we show that the canonical model satisfies the condition from item 5 of Definition~\ref{game definition}.

\begin{lemma}\label{canonical nontermination lemma}
For each initial state $\alpha\in I$ and each complete action profile $\delta\in \Delta^\mathcal{A}$, there is an outcome $\omega\in\Omega$ such that $(\alpha,\delta,\omega)\in P$. 
\end{lemma}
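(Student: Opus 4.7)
The plan is to fix any representative $\omega_0\in\alpha$ and construct $\omega$ as a single $\mathcal{A}$-child of $\omega_0$: namely $\omega:=\omega_0::\mathcal{A}::X$ for a suitable maximal consistent $X\subseteq\Phi$. Because the new edge is labeled by the full agent set $\mathcal{A}$, we will automatically have $\omega\sim_\mathcal{A}\omega_0$ and hence $\omega\in\alpha$, and item 4 of Definition~\ref{canonical outcome} will be satisfied as long as $\{\phi\mid \K_\mathcal{A}\phi\in hd(\omega_0)\}\subseteq X$. The whole task is therefore to choose $X$ so that the play condition of Definition~\ref{canonical play} holds.

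Writing $C_\psi=\{a\in\mathcal{A}\mid\delta(a)=\psi\}$ for the coalition of agents who play the formula $\psi\in\Delta=\Phi$ under $\delta$, I would start from
$$
Y\;=\;\{\phi\mid\K_\mathcal{A}\phi\in hd(\omega_0)\}\;\cup\;\{\neg\psi\mid \exists\, C\subseteq C_\psi\text{ with }\cK_C\B_C\psi\in hd(\omega_0)\},
$$
and then extend $Y$ to a maximal consistent $X$ by Lemma~\ref{Lindenbaum's lemma}. Lemma~\ref{edge transport lemma} applied to the final edge (labeled by $\mathcal{A}$) gives $\cK_C\B_C\psi\in X$ iff $\cK_C\B_C\psi\in hd(\omega_0)$ for every coalition $C$; so whenever $\cK_C\B_C\psi\in X$ and $C\subseteq C_\psi$, the witness $\neg\psi$ already sits in $Y\subseteq X$, which is exactly what Definition~\ref{canonical play} demands.

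The main obstacle is establishing consistency of $Y$. Assuming otherwise, I would extract finitely many $\K_\mathcal{A}\phi_1,\dots,\K_\mathcal{A}\phi_m\in hd(\omega_0)$ together with pairwise distinct $\chi_1,\dots,\chi_n$ and witnesses $\cK_{D_i}\B_{D_i}\chi_i\in hd(\omega_0)$ with $D_i\subseteq C_{\chi_i}$, so that $\phi_1,\dots,\phi_m\vdash\chi_1\vee\dots\vee\chi_n$. Because $\delta$ is a function and the $\chi_i$ are distinct, the sets $C_{\chi_i}$ and therefore the $D_i$ are pairwise disjoint; the None to Blame axiom together with consistency of $hd(\omega_0)$ also rules out any $D_i=\varnothing$. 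Lemma~\ref{super distributivity} then lifts the propositional derivation to $hd(\omega_0)\vdash\K_\mathcal{A}(\top\to\chi_1\vee\dots\vee\chi_n)$, whereupon Lemma~\ref{five plus plus} with $C:=\mathcal{A}$ and $\phi:=\top$ yields $hd(\omega_0)\vdash\K_\mathcal{A}(\top\to\B_\mathcal{A}\top)$. Stripping $\K_\mathcal{A}$ via the Truth axiom and discharging the trivial antecedent $\top$ gives $hd(\omega_0)\vdash\B_\mathcal{A}\top$, which contradicts the Blamelessness of Truth axiom and the consistency of $hd(\omega_0)$. This is the only nontrivial step; the rest of the verification is routine.
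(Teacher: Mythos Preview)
Your argument is correct. The underlying idea coincides with the paper's, but the packaging differs: the paper first obtains \emph{some} play $(\alpha,\delta_0,\omega_0)\in P$ via Lemma~\ref{delta exists lemma}, observes from the Blamelessness of Truth axiom that $\neg(\top\to\B_\mathcal{A}\top)\in hd(\omega_0)$, and then simply invokes Lemma~\ref{B child exists lemma} with $C=\mathcal{A}$, $\phi=\top$, and $s=\delta$ to produce the desired play. What you do is essentially inline that special case of Lemma~\ref{B child exists lemma}: your set $Y$ is exactly the set $X$ from that lemma (minus the harmless $\{\top\}$), and your consistency argument is the specialization of its Claim~1. Your verification of the play condition via Lemma~\ref{edge transport lemma} is in fact a little cleaner than the introspection argument used in Case~I of Claim~2 there. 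The paper's route is shorter at this point because the work was already amortized into Lemma~\ref{B child exists lemma}; yours is self-contained. One minor remark: the observation that None to Blame rules out $D_i=\varnothing$ is true but unnecessary, since Lemma~\ref{five plus plus} does not require the $D_i$ to be nonempty.
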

\begin{proof}
By Definition~\ref{canonical outcome}, initial state $\alpha$ is an equivalence class. Since each equivalence class is not empty, there must exist an outcome $\omega_0\in \Omega$ such that $\omega_0\in \alpha$. By Lemma~\ref{delta exists lemma}, there is an initial state $\alpha_0\in I$ and a complete action profile $\delta_0\in \Delta^\mathcal{A}$ such that $(\alpha_0,\delta_0,\omega_0)\in P$. Then, $\omega_0\in \alpha_0$ by Definition~\ref{canonical play}. Hence, $\omega_0$ belongs to equivalence classes $\alpha$ and $\alpha_0$. Thus, $\alpha=\alpha_0$. Therefore, $(\alpha,\delta_0,\omega_0)\in P$.

Note that $\neg\B_{\mathcal{A}}\top$ is an instance of the Blamelessness of Truth axiom. Thus, by the laws of propositional reasoning, $\vdash \neg(\top\to \B_{\mathcal{A}}\top)$. Hence, $\neg(\top\to \B_{\mathcal{A}}\top)\in hd(\omega_0)$ because set $hd(\omega_0)$ is maximal. Thus, by Lemma~\ref{B child exists lemma}, applied to play $(\alpha,\delta_0,\omega_0)\in P$, the action profile $\delta\in\Delta^\mathcal{A}$, and formula $\neg(\top\to \B_{\mathcal{A}}\top)\in hd(\omega_0)$, there is a play $(\alpha',\delta',\omega)\in P$ such that $\alpha\sim_{\mathcal{A}}\alpha'$, $\delta=_{\mathcal{A}}\delta'$, and $\top\in hd(\omega)$. Then, $\alpha=\alpha'$ by Corollary~\ref{alpha corollary}.
Therefore,
$(\alpha,\delta,\omega)=(\alpha',\delta',\omega)\in P$.
\end{proof}

\begin{lemma}\label{N child exists lemma}
For any $(\alpha,\delta,\omega)\in P$ and any  $\neg\K_C\phi\in hd(\omega)$,  there is a play $(\alpha',\delta',\omega')\in P$ such that $\alpha\sim_C\alpha'$ and $\neg\phi\in hd(\omega')$.
\end{lemma}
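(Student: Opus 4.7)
The plan is to construct the witness outcome $\omega'$ by extending the ``descendant'' set of $\omega$ along a $C$-labelled edge, then use Lemma~\ref{delta exists lemma} to equip it with a complete action profile that makes it a play.

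First I would form the set
\[
Y \;=\; \{\neg\phi\}\;\cup\;\{\psi\mid \K_C\psi\in hd(\omega)\}
\]
and verify that $Y$ is consistent. If not, there would be formulae $\K_C\psi_1,\dots,\K_C\psi_n\in hd(\omega)$ with $\psi_1,\dots,\psi_n\vdash\phi$. Lemma~\ref{super distributivity} would then give $\K_C\psi_1,\dots,\K_C\psi_n\vdash\K_C\phi$, hence $hd(\omega)\vdash\K_C\phi$, contradicting the consistency of $hd(\omega)$ together with the assumption $\neg\K_C\phi\in hd(\omega)$. So $Y$ is consistent and, by Lemma~\ref{Lindenbaum's lemma}, it extends to a maximal consistent set $Y'$.

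Next I would set $\omega' = \omega::C::Y'$. The fact that $\{\psi\mid\K_C\psi\in hd(\omega)\}\subseteq Y\subseteq Y'$ shows, via Definition~\ref{canonical outcome}, that $\omega'\in\Omega$, and $hd(\omega')=Y'$ contains $\neg\phi$. By construction the unique edge between $\omega$ and $\omega'$ is labelled with $C$, so $\omega\sim_C\omega'$ by Definition~\ref{canonical sim}. Taking $\alpha'$ to be the $\sim_\mathcal{A}$-equivalence class of $\omega'$, Lemma~\ref{alpha iff omega} then yields $\alpha\sim_C\alpha'$.

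The last step is to produce a complete action profile $\delta'$ with $(\alpha',\delta',\omega')\in P$. Here I would invoke Lemma~\ref{delta exists lemma} applied to $\omega'$: it gives an initial state $\alpha''\in I$ and a profile $\delta'\in\Delta^\mathcal{A}$ with $(\alpha'',\delta',\omega')\in P$, and by Definition~\ref{canonical play} we have $\omega'\in\alpha''$. Since also $\omega'\in\alpha'$, the two equivalence classes coincide, so $(\alpha',\delta',\omega')\in P$, completing the proof. The only step requiring real content is the consistency of $Y$; the rest is the same tree-based bookkeeping already used in the preceding lemmas.
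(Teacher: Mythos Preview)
Your proposal is correct and follows essentially the same route as the paper's proof: build the set $\{\neg\phi\}\cup\{\psi\mid\K_C\psi\in hd(\omega)\}$, show it is consistent via Lemma~\ref{super distributivity}, extend by Lindenbaum, set $\omega'=\omega::C::Y'$, and then invoke Lemma~\ref{delta exists lemma} to obtain the required play. The only cosmetic difference is that the paper lets $\alpha'$ be the initial state supplied directly by Lemma~\ref{delta exists lemma} and then applies Lemma~\ref{alpha iff omega}, whereas you first define $\alpha'$ as the $\sim_\mathcal{A}$-class of $\omega'$ and then argue it coincides with the state from Lemma~\ref{delta exists lemma}; these are equivalent.
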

\begin{proof}
Consider the set $X=\{\neg\phi\}\;\cup\;\{\psi\;|\;\K_C\psi\in hd(\omega)\}$. First, we show that set $X$ is consistent. Suppose the opposite. Then, there are formulae  $\K_C\psi_1,\dots,\K_C\psi_n\in hd(\omega)$
such that
$
\psi_1,\dots,\psi_n\vdash\phi.
$
Hence, 
$
\K_C\psi_1,\dots,\K_C\psi_n\vdash\K_C\phi
$
by Lemma~\ref{super distributivity}.
Thus, $hd(\omega)\vdash\K_C\phi$ because $\K_C\psi_1,\dots,\K_C\psi_n\in hd(\omega)$. Hence, $\neg\K_C\phi\notin hd(\omega)$ because set $hd(\omega)$ is consistent, which contradicts the assumption of the lemma. Therefore, set $X$ is consistent.

By Lemma~\ref{Lindenbaum's lemma}, there is a maximal consistent extension $X'$ of set $X$. Let $\omega'$ be the sequence $\omega::C::X'$. Note that $\omega'\in\Omega$ by Definition~\ref{canonical outcome} and the choice of sets $X$ and $X'$.  Also, $\neg\phi\in X\subseteq X'=hd(\omega')$ by the choice of sets $X$ and $X'$. 

By Lemma~\ref{delta exists lemma}, there is an initial state $\alpha'\in I$ and a complete action profile $\delta'$ such that $(\alpha',\delta',\omega')\in P$.  Note that $\omega\sim_C\omega'$ by Definition~\ref{canonical sim} and the choice of sequence $\omega'$. Thus, $\alpha\sim_C\alpha'$ by Lemma~\ref{alpha iff omega}.
\end{proof}

The next lemma is the ``induction'' lemma,  also known as the ``truth'' lemma, that connects the syntax of our logical system with the semantics of the canonical model.

\begin{lemma}\label{induction lemma}
$(\alpha,\delta,\omega)\Vdash\phi$ iff $\phi\in hd(\omega)$ for each play $(\alpha,\delta,\omega)\in P$ and each formula $\phi\in\Phi$.
\end{lemma}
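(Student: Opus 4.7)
The proof will proceed by induction on the structural complexity of $\phi$. The atomic and Boolean cases are routine: for $\phi=p$, Definition~\ref{canonical pi} immediately gives equivalence; the cases $\phi=\neg\psi$ and $\phi=\psi_1\to\psi_2$ follow from the induction hypothesis and the maximality and consistency of $hd(\omega)$.

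For the modal case $\phi=\K_C\psi$, the forward direction uses Lemma~\ref{transport lemma}: given $\K_C\psi\in hd(\omega)$ and any play $(\alpha',\delta',\omega')\in P$ with $\alpha\sim_C\alpha'$, Definition~\ref{canonical play} guarantees $\omega\in\alpha$ and $\omega'\in\alpha'$, so Lemma~\ref{alpha iff omega} yields $\omega\sim_C\omega'$, and then Lemma~\ref{transport lemma} gives $\psi\in hd(\omega')$, which by the induction hypothesis yields $(\alpha',\delta',\omega')\Vdash\psi$. For the backward direction, suppose $\K_C\psi\notin hd(\omega)$; by maximality $\neg\K_C\psi\in hd(\omega)$, and Lemma~\ref{N child exists lemma} supplies a play $(\alpha',\delta',\omega')\in P$ with $\alpha\sim_C\alpha'$ and $\neg\psi\in hd(\omega')$, so by consistency and the induction hypothesis $(\alpha',\delta',\omega')\nVdash\psi$, hence $(\alpha,\delta,\omega)\nVdash\K_C\psi$.

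The heart of the proof is the case $\phi=\B_C\psi$, and this is where I expect the main obstacle. For the forward direction, assume $\B_C\psi\in hd(\omega)$. The Truth axiom gives $\psi\in hd(\omega)$, so by the induction hypothesis $(\alpha,\delta,\omega)\Vdash\psi$. I then exhibit the witnessing action profile $s\in\Delta^C$ defined by $s(a)=\psi$ for every $a\in C$. For any play $(\alpha',\delta',\omega')\in P$ with $\alpha\sim_C\alpha'$ and $s=_C\delta'$, Lemma~\ref{alpha iff omega} yields $\omega\sim_C\omega'$. Now by Lemma~\ref{add cK lemma} applied inside $hd(\omega)$, we have $\cK_C\B_C\psi\in hd(\omega)$; applying Lemma~\ref{pre-transport lemma} to the formula $\neg\B_C\psi$ and then taking the contrapositive via maximality and consistency transports $\cK_C\B_C\psi$ to $hd(\omega')$. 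Since $\delta'(a)=\psi$ for every $a\in C$, the vetoing condition in Definition~\ref{canonical play} forces $\neg\psi\in hd(\omega')$, so $\psi\notin hd(\omega')$ and by the induction hypothesis $(\alpha',\delta',\omega')\nVdash\psi$, establishing $(\alpha,\delta,\omega)\Vdash\B_C\psi$.

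For the backward direction of the $\B_C$ case, suppose $\B_C\psi\notin hd(\omega)$. If $\psi\notin hd(\omega)$ then the induction hypothesis gives $(\alpha,\delta,\omega)\nVdash\psi$ and we are done. Otherwise $\psi\in hd(\omega)$ while $\neg\B_C\psi\in hd(\omega)$, so by propositional reasoning $\neg(\psi\to\B_C\psi)\in hd(\omega)$. Given any candidate action profile $s\in\Delta^C$, Lemma~\ref{B child exists lemma} produces a play $(\alpha',\delta',\omega')\in P$ with $\alpha\sim_C\alpha'$, $s=_C\delta'$, and $\psi\in hd(\omega')$; the induction hypothesis gives $(\alpha',\delta',\omega')\Vdash\psi$, showing that no action profile $s$ can serve as a witness and therefore $(\alpha,\delta,\omega)\nVdash\B_C\psi$. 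The delicate point in the whole argument is the forward $\B_C$ direction, which requires combining the tree-based transport of $\cK_C$-formulae (Lemma~\ref{pre-transport lemma}) with the veto mechanism of Definition~\ref{canonical play}; the other cases only use machinery already standard in completeness proofs for distributed knowledge logics.
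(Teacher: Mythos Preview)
Your proof is correct and follows essentially the same route as the paper's own proof: induction on formula complexity, with the $\K_C$ case handled by Lemma~\ref{transport lemma}/Lemma~\ref{N child exists lemma} and the $\B_C$ case handled by the veto profile $s(a)=\psi$ together with Lemma~\ref{B child exists lemma}. The only noteworthy difference is in how you move $\cK_C\B_C\psi$ from $hd(\omega)$ to $hd(\omega')$: the paper first derives $\K_C\cK_C\B_C\psi\in hd(\omega)$ via Negative Introspection and then applies Lemma~\ref{transport lemma}, whereas you apply Lemma~\ref{pre-transport lemma} to $\K_C\neg\B_C\psi$ and negate using maximality and consistency; both routes are valid and amount to the same one-line observation.
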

\begin{proof}
We prove the lemma by induction on the complexity of formula $\phi$. If $\phi$ is a propositional variable, then the lemma follows from Definition~\ref{sat} and Definition~\ref{canonical pi}. If formula $\phi$ is an implication or a negation, then the required follows from the maximality and the consistency of set $\omega$ by Definition~\ref{sat} in the standard way.

Assume that formula $\phi$ has the form $\K_C\psi$.

\noindent $(\Rightarrow):$ Let $\K_C\psi\notin hd(\omega)$. Thus, $\neg\K_C\psi\in hd(\omega)$ by the maximality of set $hd(\omega)$. Hence, by Lemma~\ref{N child exists lemma}, there is a play $(\alpha',\delta',\omega')\in P$ such that $\alpha\sim_C\alpha'$ and $\neg\psi\in hd(\omega')$. Then, $\psi\notin hd(\omega')$ by the consistency of set $hd(\omega')$. Thus, $(\alpha',\delta',\omega')\nVdash\psi$ by the induction hypothesis. Therefore, $(\alpha,\delta,\omega)\nVdash\K_C\psi$ by Definition~\ref{sat}.

\vspace{.5mm}

\noindent $(\Leftarrow):$ Let $\K_C\psi\in hd(\omega)$. Thus, $\psi\in hd(\omega')$ for any $\omega'\in\Omega$ such that $\omega\sim_C\omega'$, by Lemma~\ref{transport lemma}. Hence, by the induction hypothesis, $(\alpha',\delta',\omega')\Vdash\psi$ for each play $(\alpha',\delta',\omega')\in P$ such that  $\omega\sim_C\omega'$. Thus, $(\alpha',\delta',\omega')\Vdash\psi$ for each  $(\alpha',\delta',\omega')\in P$ such that  $\alpha\sim_C\alpha'$, by Lemma~\ref{alpha iff omega}. Therefore, $(\alpha,\delta,\omega)\Vdash\K_C\psi$ by Definition~\ref{sat}. 

Assume that formula $\phi$ has the form $\B_C\psi$. 

\noindent $(\Rightarrow):$ Suppose $\B_C\psi\notin hd(\omega)$. First, consider the case when $\psi\notin hd(\omega)$. Then, $(\alpha,\delta,\omega)\nVdash\psi$ by the induction hypothesis. Thus, $(\alpha,\delta,\omega)\nVdash\B_C\psi$ by Definition~\ref{sat}. 

Next, suppose  $\psi\in hd(\omega)$. Observe that $\psi\to\B_C\psi\notin hd(\omega)$. Indeed, if $\psi\to\B_C\psi\in hd(\omega)$, then $hd(\omega)\vdash \B_C\psi$ by the Modus Ponens inference rule. Thus, $\B_C\psi\in hd(\omega)$ by the  maximality of set $hd(\omega)$, which contradicts the assumption above.

Because $hd(\omega)$ is a maximal set, statement $\psi\to\B_C\psi\notin hd(\omega)$ implies that $\neg(\psi\to\B_C\psi)\in hd(\omega)$. Hence, by Lemma~\ref{B child exists lemma}, for any action profile $s\in \Delta^C$, there is a play $(\alpha',\delta',\omega')$ such that $\alpha\sim_C\alpha'$ and $\psi\in hd(\omega')$. Thus, by the induction hypothesis, for any action profile $s\in \Delta^C$, there is a play $(\alpha',\delta',\omega')$ such that $\alpha\sim_C\alpha'$ and $(\alpha',\delta',\omega')\Vdash \psi$. Therefore, $(\alpha,\delta,\omega)\nVdash\B_C\psi$ by Definition~\ref{sat}.

\vspace{1mm}
\noindent $(\Leftarrow):$ Let $\B_C\psi\in hd(\omega)$. Hence, $hd(\omega)\vdash\psi$ by the Truth axiom. Thus, $\psi\in hd(\omega)$ by the maximality of the set $hd(\omega)$. Then, $(\alpha,\delta,\omega)\Vdash\psi$ by the induction hypothesis.

Next, let $s\in \Delta^C$ be the action profile of coalition $C$ such that $s(a)=\psi$ for each agent $a\in C$. Consider any play $(\alpha',\delta',\omega')\in P$ such that $\alpha\sim_C\alpha'$ and $s=_C\delta'$. By Definition~\ref{sat}, it suffices to show that  $(\alpha',\delta',\omega')\nVdash \psi$. 

Indeed, by Lemma~\ref{add cK lemma}, assumption $\B_C\psi\in hd(\omega)$  implies that $hd(\omega)\vdash \cK_C\B_C\psi$. Thus, $hd(\omega)\vdash \K_C\cK_C\B_C\psi$ by the Negative introspection axiom, the Modus Ponens inference rule, and the definition of modality $\cK$. Hence, $\K_C\cK_C\B_C\psi\in hd(\omega)$ by the maximality of set $hd(\omega)$. Observe that $\omega\sim_C\omega'$ by Lemma~\ref{alpha iff omega} and the assumption $\alpha\sim_C\alpha'$.  Thus,  $\cK_C\B_C\psi\in hd(\omega')$ by Lemma~\ref{transport lemma}.

Recall that $s(a)=\psi$ for each agent $a\in C$ by the choice of the action profile $s$. Also, $s=_C\delta'$ by the choice of the play  $(\alpha',\delta',\omega')$. Hence, $\delta'(a)=\psi$ for each agent $a\in C$. Thus, $\neg\psi\in hd(\omega')$ by Definition~\ref{canonical play} and because $\cK_C\B_C\psi\in hd(\omega')$. Then,  $\psi\notin hd(\omega')$ the consistency of set $hd(\omega')$. Therefore, $(\alpha',\delta',\omega')\nVdash \psi$ by the induction hypothesis.
\end{proof}

Finally, we are ready to state and prove the strong completeness of our logical system.
\begin{theorem}\label{completeness theorem}
If $X\nvdash\phi$, then there is a game, and a play $(\alpha,\delta,\omega)$ of this game such that $(\alpha,\delta,\omega)\Vdash\chi$ for each $\chi\in X$ and $(\alpha,\delta,\omega)\nVdash\phi$.
\end{theorem}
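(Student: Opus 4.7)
The plan is to reduce the theorem to a direct application of the induction lemma (Lemma~\ref{induction lemma}) inside the canonical game $G(X_0)$ for a carefully chosen maximal consistent set $X_0$. First, from the assumption $X \nvdash \phi$ together with the deduction lemma (Lemma~\ref{deduction lemma}), I would observe that the set $X \cup \{\neg \phi\}$ is consistent. By Lindenbaum's lemma (Lemma~\ref{Lindenbaum's lemma}), this set extends to a maximal consistent set $X_0 \subseteq \Phi$ that simultaneously contains $X$ and $\neg \phi$.

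Next, I would instantiate the canonical game $G(X_0) = (I,\{\sim_a\}_{a\in\mathcal{A}},\Delta,\Omega,P,\pi)$ built in this section. By Definition~\ref{canonical outcome} with $n = 0$, the one-term sequence whose single element is $X_0$ is a valid outcome $\omega_0 \in \Omega$ (the condition on $\K_{C_i}$-formulae is vacuous since there are no $C_i$), and $hd(\omega_0) = X_0$. Applying Lemma~\ref{delta exists lemma} to $\omega_0$ produces an initial state $\alpha \in I$ and a complete action profile $\delta \in \Delta^{\mathcal{A}}$ with $(\alpha,\delta,\omega_0) \in P$.

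Finally, the induction lemma applied to this play yields $(\alpha,\delta,\omega_0) \Vdash \psi$ iff $\psi \in hd(\omega_0) = X_0$ for every $\psi \in \Phi$. Since $X \subseteq X_0$, this immediately gives $(\alpha,\delta,\omega_0) \Vdash \chi$ for every $\chi \in X$; since $\neg \phi \in X_0$ and $X_0$ is consistent, $\phi \notin X_0$, and therefore $(\alpha,\delta,\omega_0) \nVdash \phi$.

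I do not expect any real obstacle at this final stage: the hard work has already been done in setting up the tree-structured outcome space that correctly models distributed knowledge (Lemma~\ref{pre-transport lemma} and Lemma~\ref{transport lemma}), ensuring the ``no premature termination'' condition in Definition~\ref{game definition} (Lemma~\ref{canonical nontermination lemma}), and proving the truth lemma itself (Lemma~\ref{induction lemma}). Once those are available, the strong completeness statement follows by the routine Lindenbaum-and-root-outcome argument sketched above.
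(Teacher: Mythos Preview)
Your proposal is correct and follows essentially the same route as the paper: extend $X\cup\{\neg\phi\}$ to a maximal consistent set $X_0$, take the single-element sequence $X_0$ as the root outcome $\omega_0$ in the canonical game $G(X_0)$, use Lemma~\ref{delta exists lemma} to obtain a play $(\alpha,\delta,\omega_0)\in P$, and then invoke Lemma~\ref{induction lemma}. The only cosmetic difference is that the paper concludes $(\alpha,\delta,\omega_0)\Vdash\neg\phi$ and then applies Definition~\ref{sat}, whereas you argue $\phi\notin X_0$ by consistency; both are immediate.
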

\begin{proof}
Assume that $X\nvdash\phi$. Hence, set $X\cup\{\neg\phi\}$ is consistent. By Lemma~\ref{Lindenbaum's lemma}, there is a maximal consistent extension $X_0$ of set $X\cup\{\neg\phi\}$. Let game $\left(I,\{\sim_a\}_{a\in\mathcal{A}},\Delta,\Omega,P,\pi\right)$ be the canonical game $G(X_0)$. Also, let $\omega_0$ be the single-element sequence $X_0$. Note that $\omega_0\in \Omega$ by Definition~\ref{canonical outcome}. 
By Lemma~\ref{delta exists lemma},  there is an initial state $\alpha\in I$ and a complete action profile $\delta\in \Delta^\mathcal{A}$ such that $(\alpha,\delta,\omega_0)\in P$. Hence, $(\alpha,\delta,\omega_0)\Vdash\chi$ for each $\chi\in X$ and $(\alpha,\delta,\omega_0)\Vdash\neg\phi$ by Lemma~\ref{induction lemma} and the choice of set $X_0$. Therefore,  $(\alpha,\delta,\omega_0)\nVdash\phi$ by Definition~\ref{sat}.
\end{proof}

\section{Conclusion}\label{conclusion section}

In this article we proposed a definition of blameworthiness in strategic games with imperfect information and gave a sound and complete logical system that captures the interplay between distributed knowledge and blameworthiness modalities. This works extends our previous result for perfect information setting~\cite{nt19aaai}. 

\label{end of article}

\bibliographystyle{elsarticle-num}
\bibliography{sp}

\end{document}